\documentclass[11pt]{article}
\usepackage[margin=1in]{geometry}
\usepackage{amsmath,amssymb,amsthm,mathtools}
\usepackage{booktabs,graphicx,microtype,natbib}
\usepackage{setspace}

\usepackage{hyperref,xcolor,caption,subcaption}
\hypersetup{colorlinks=true,citecolor=blue!55!black,linkcolor=blue!55!black,urlcolor=blue!55!black,pdftitle={Deep Generalized Regression for Repeated Measurements},pdfauthor={Kexuan Li}}
\newtheorem{theorem}{Theorem}[section]
\newtheorem{proposition}[theorem]{Proposition}
\newtheorem{corollary}[theorem]{Corollary}
\newtheorem{lemma}[theorem]{Lemma}
\newtheorem{assumption}[theorem]{Assumption}
\theoremstyle{remark}

\newcommand{\E}{\mathbb E}
\newcommand{\PP}{\mathbb P}
\newcommand{\RR}{\mathbb R}
\newcommand{\cF}{\mathcal F}
\newcommand{\cO}{\mathcal O}
\newcommand{\norm}[1]{\lVert #1\rVert}
\newcommand{\ip}[2]{\langle #1,#2\rangle}
\newcommand{\Var}{\operatorname{Var}}
\newcommand{\Cov}{\operatorname{Cov}}
\newcommand{\tr}{\operatorname{tr}}
\newcommand{\KL}{\operatorname{KL}}
\newcommand{\TV}{\operatorname{TV}}

\newcommand{\op}{\operatorname}
\newcommand{\pnorm}[1]{\norm{#1}_{P_X}}
\title{Generalized Deep Regression for Repeated Measurements}
\author{Kexuan Li \\ Bristol Myers Squibb \\ \texttt{[kexuan.li.77@gmail.com]}}
\date{}

\begin{document}
\maketitle
\begin{abstract}
In this paper, we study the estimation of a marginal regression function from independent units with repeated binary, count, or continuous responses using ReLU deep neural networks. In the model, we assume that the dependence is generated by an unobserved random mean function within each unit. We then fit a neural network with a convex generalized regression loss. We show an oracle inequality by separating conditional measurement variation from between-unit variation. In addition, we prove that with $n$ units and $m$ measurements per unit, ReLU networks can attain an integrated mean squared error of order $n^{-1}+(nm)^{-2\beta/(2\beta+d)}$, up to logarithmic factors, over $\beta$-H\"older classes. We also derive a weighted oracle inequality for unequal cluster sizes and a rate for compositionally smooth functions. For pointwise ensemble inference, we give a projection central limit theorem and prove infinitesimal jackknife consistency under an explicit asymptotic linearity condition. Simulations and real data examples are provided to support our theoretical findings and practical implications.
\end{abstract}

\clearpage

\section{Introduction}

Repeated outcomes arise in longitudinal studies, digital health measurements, panel data, functional data, and experiments that record several responses from each independent unit \citep{diggle2002,baltagi2021,ramsaysilverman2005}. For example, a patient may be examined at a sequence of visits, a household may be surveyed over several years, and a wearable device may record many measurements from the same user, etc.. In each case, units can reasonably be treated as independent, whereas observations from the same unit remain dependent. Among these, a basic objective is to estimate the marginal mean of a response as a function of covariates. In practice, the relationship of the response and the covariates could be complex and nonlinear with interaction. At the same time, the response could be continuous, binary, or count data.

Classical methods address this within unit dependence in several ways. To name a few, linear and generalized linear mixed models represent heterogeneity through random effects, while generalized estimating equations target marginal regression parameters using a working correlation model \citep{lairdware1982,liangzeger1986}. Kernel and spline methods allow the marginal mean to vary nonparametrically and can incorporate within-subject correlation \citep{wang2003}. In functional data analysis, the effect of sampling frequency has been studied through minimax theory for discretely observed random functions. This work identified transitions between sparse and dense designs and showed that sufficiently dense sampling eventually leaves the number of independent curves as the limiting factor \citep{caiyuan2011,zhangwang2016}. Even these methods provide effective analyses for low dimensional covariates or specified basis structures, their use becomes more difficult when the true unknown regression function contains interactions, or unknown complex structure.

In recent years, deep neural networks have attracted considerable attention
because of their ability to represent complex structures and have been applied
in areas including computer vision and natural language processing
\citep{lecun2015deep}, drug discovery \citep{chen2018rise}, and environmental
and Earth system science \citep{reichstein2019deep}.  Besides its successful applications, there has also
been great progress on theoretical development of deep learning in statistical literature. Approximation results for ReLU networks showed that depth can exploit smoothness and compositional structure \citep{yarotsky2017}. \citet{schmidthieber2020} developed nonparametric risk bounds for sparse ReLU networks and showed that the estimator can adapt to hierarchical composition models under independent sampling. \citet{farrell2021} established estimation rates for deep empirical risk minimizers and used them to support inference on low-dimensional parameters. Related ideas have since been adapted to statistical models with specialized losses and dependence structures. Examples include partially linear Cox regression \citep{zhongmuellerwang2022}, partially linear quantile regression \citep{zhongwang2024}, semiparametric spatial regression \citep{lietal2023spatial}, and calibration of nonlinear ordinary differential equations from noisy observations \citep{lietal2024ode}, etc. This literature provides theory for deep estimators in a range of statistical settings, but generalized responses and repeated measurements have largely been studied along separate lines.

The present paper is motivated by two recent developments that address these lines separately. \citet{yanyaozhou2025} studied least-squares deep regression for repeated continuous measurements. Their analysis accommodates arbitrary sampling frequency, establishes the rate transition between sparse and dense designs, and develops results for several low-intrinsic-dimensional function classes. In a different direction, \citet{mengli2026} studied generalized nonparametric regression with independent observations. They derived error bounds for deep network estimators and proposed an ensemble subsampling method for pointwise inference on categorical and exponential-family means. These results leave open the marginal nonparametric problem in which generalized responses are repeatedly observed from the same units. 

To fill this gap, we consider the estimation of the regression function with generalized response using deep neural networks. Specifically, we assume that each unit is independent and the dependence is captured with unit-specific random mean functions. Conditional on these functions, the repeated responses follow a generalized mean model; the distribution of the random functions is otherwise left unspecified. The population target is the marginal mean function, estimated by minimizing a pooled convex generalized regression loss over a class of ReLU networks. The resulting oracle inequality separates a between-unit term from a conditional measurement term. For $n$ units with $m$ measurements per unit and a $\beta$-H\"older target on $[0,1]^d$, the integrated squared error is of order
\[
 n^{-1}+(nm)^{-2\beta/(2\beta+d)},
\]
up to logarithmic factors. Matching lower bounds for Bernoulli, Poisson, and Gaussian submodels show that the two components reflect distinct sources of information. We also give a weighted oracle inequality for unequal cluster sizes and rates for compositionally smooth functions. For pointwise inference, we study ensembles formed by subsampling complete units. A projection central limit theorem and consistency of the infinitesimal jackknife covariance estimator are established under stated bias and asymptotic-linearity conditions.

The remainder of the paper is organized as follows. Section~\ref{sec:model} introduces the repeated-measurement model and the marginal target. Section~\ref{sec:estimation} defines the network estimator. Section~\ref{sec:risk} gives the oracle inequalities, convergence rates, and minimax lower bounds. Section~\ref{sec:inference} develops unit-subsampling inference. Section~\ref{sec:simulation} presents the numerical study and a real data application. Proofs are collected in the appendix.

\paragraph{Notation.}
For two nonnegative sequences $a_n$ and $b_n$, write $a_n\lesssim b_n$ if $a_n\leq Cb_n$ for a constant $C$ independent of $n$, and write $a_n\asymp b_n$ when both $a_n\lesssim b_n$ and $b_n\lesssim a_n$ hold. The symbols $\E$, $\PP$, $\Var$, and $\Cov$ denote expectation, probability, variance, and covariance. For a measurable function $g$, let $\norm{g}_\infty$ be its supremum norm, $\pnorm{g}^2=\int g^2\,dP_X$, and $\ip{g}{h}=\int gh\,dP_X$. Constants denoted by $c$ or $C$ may change from line to line.

\section{Model and marginal target}\label{sec:model}
\subsection{Observations and within-unit dependence}
Suppose we are given $n$ independent units and each unit $i$ has $m_i$ measurements, indexed by $j=1,\ldots,m_i$. For the unit $i$, at measurement $j$, the covariate vector is $X_{ij}\in\mathcal X=[0,1]^d$, where $d$ is the covariate dimension, and the response is $Y_{ij}$. The response may be binary, a nonnegative count, or continuous. Write the complete record for unit $i$ as
\[
 \cO_i=\{(X_{ij},Y_{ij}):1\leq j\leq m_i\},\qquad i=1,\ldots,n.
\]
The cluster sizes $m_i\geq1$ are deterministic and may vary with $n$. The total number of observations is $N=\sum_{i=1}^n m_i$; in a balanced design, $m_i=m$ and $N=nm$.

Let $M$ be a jointly measurable random function on $\mathcal X$, and let $M_1,\ldots,M_n$ be independent copies of $M$. The value $M_i(x)$ is the conditional response mean for unit $i$ at covariate value $x$. Let $P_X$ denote the common probability distribution of the covariates on $\mathcal X$, and let $Q_{M_i,x}$ denote the conditional probability distribution of a response given $M_i$ and $X_{ij}=x$. Given $M_1,\ldots,M_n$, the pairs $(X_{ij},Y_{ij})$ are independent across all $(i,j)$ and satisfy
\begin{equation}\label{eq:model}
 X_{ij}\sim P_X,\qquad
 Y_{ij}\mid(X_{ij}=x,M_i)\sim Q_{M_i,x},\qquad
 \int y\,dQ_{M_i,x}(y)=M_i(x).
\end{equation}
The covariates are independent of the random functions. The same map $(M_i,x)\mapsto Q_{M_i,x}$ is used at every measurement and in every unit. Responses within a unit share $M_i$ and are therefore generally dependent after averaging over that random function. The distribution of $M$ describes heterogeneity between units.

Below, $(M,X,Y)$ denotes a generic draw from this mechanism: first draw $M$, then draw $X\sim P_X$ independently of $M$, and finally draw $Y\mid(M,X=x)\sim Q_{M,x}$. Expectations without a conditioning variable average over all the randomness in this draw, or in the observed sample when an estimator is involved.

\subsection{Marginal mean and loss}
Define the population marginal mean $\mu_0:\mathcal X\to\mathbb R$ and the unit-specific deviation $U_i:\mathcal X\to\mathbb R$ by
\[
 \mu_0(x)=\E M(x),\qquad U_i(x)=M_i(x)-\mu_0(x).
\]
Write $U=M-\mu_0$ for a generic copy of $U_i$. Thus $\E U(x)=0$ and $\E(Y_{ij}\mid X_{ij}=x)=\mu_0(x)$. The problem considered in this paper is to estimate the marginal mean
function $\mu_0$ from through deep neural networks.

We represent $\mu_0$ through an unknown real-valued regression function $f_0$ and a known function $b$, and define the loss $\ell$ by
\begin{equation}\label{eq:link}
 \mu_0(x)=b'\{f_0(x)\},\qquad \ell(y,t)=b(t)-yt,
\end{equation}
where $b'$ denotes the derivative of $b$, $y$ is a response value, and $t$ is a candidate value of the regression function on the link scale. The curvature condition below makes $b'$ strictly increasing on the relevant interval, so that $f_0=(b')^{-1}\circ\mu_0$ is well defined there. For logistic, Poisson, and squared-error losses, respectively,
\[
 b(t)=\log(1+e^t),\qquad b(t)=e^t,\qquad b(t)=t^2/2.
\]
These choices give $\mu_0(x)=\{1+\exp[-f_0(x)]\}^{-1}$, $\mu_0(x)=\exp\{f_0(x)\}$, and $\mu_0(x)=f_0(x)$, respectively. The estimation bounds use the conditional mean in \eqref{eq:model} and the moment assumptions below. The lower bounds use Bernoulli, Poisson, and Gaussian conditional distributions.

For a measurable real-valued function $v$ on $\mathcal X$, write $\pnorm{v}^2=\int_{\mathcal X}v(x)^2\,dP_X(x)$ and $\norm{v}_\infty=\sup_{x\in\mathcal X}|v(x)|$. The space $L_2(P_X)$ consists of functions with finite $\pnorm{v}$, with functions identified when they agree $P_X$-almost everywhere. Its inner product is $\ip{v}{w}=\int_{\mathcal X}v(x)w(x)\,dP_X(x)$.

\subsection{Regularity conditions and score variance}
The following assumption bounds the link-scale regression function, the variation of the unit-specific means, and the conditional response noise.

\begin{assumption}\label{ass:model}
There are constants $F>0$, $C_U<\infty$, $\sigma<\infty$, $K_\epsilon<\infty$, and $0<\kappa_- \leq \kappa_+<\infty$ such that
\[
 \norm{f_0}_\infty\leq F,\quad
 \kappa_-\leq b''(t)\leq\kappa_+\ \ (|t|\leq F),\quad
 \norm{U_i}_\infty\leq C_U\quad\hbox{almost surely}.
\]
The function $b$ is twice continuously differentiable on a neighborhood of $[-F,F]$. Define
\[
 \tau^2=\E\pnorm{U_i}^2.
\]
The centered errors $\epsilon_{ij}=Y_{ij}-M_i(X_{ij})$ satisfy
\begin{equation}\label{eq:bernstein}
 \E(|\epsilon_{ij}|^k\mid X_{ij},M_i)
 \leq \frac{k!}{2}\sigma^2 K_\epsilon^{\,k-2},\qquad k\geq2.
\end{equation}
\end{assumption}

Here $F$ bounds $f_0$, $C_U$ bounds the unit deviations, and $\kappa_-$ and $\kappa_+$ bound the curvature of the loss. The constants $\sigma^2$ and $K_\epsilon$ control the conditional moments of $\epsilon_{ij}$; in particular, $\sigma^2$ is an upper bound on its conditional variance. The quantity
\[
 \tau^2=\int_{\mathcal X}\Var\{M(x)\}\,dP_X(x)
\]
measures integrated between-unit variation. The conditional mean of $\epsilon_{ij}$ is zero. Bernoulli responses satisfy \eqref{eq:bernstein}, as do Poisson responses with uniformly bounded conditional means and Gaussian errors with fixed variance. The constants in Assumption~\ref{ass:model} are uniform as $n$ and the $m_i$ vary.

For a deterministic measurable candidate $f:\mathcal X\to[-F,F]$, define the excess population risk by
\[
 \mathcal R(f)=\E\{\ell(Y,f(X))-\ell(Y,f_0(X))\},
\]
where $(X,Y)$ has the generic marginal distribution defined above. For a random estimator $\widehat f$, $\mathcal R(\widehat f)$ and $\pnorm{\widehat f-f_0}^2$ evaluate the fitted function against this population distribution, and their outer expectations average over the training sample.

\begin{proposition}[Identification and curvature]\label{prop:curvature}
Under Assumption~\ref{ass:model}, for every measurable $f$ with $\norm{f}_\infty\leq F$,
\begin{equation}\label{eq:curvature}
 \frac{\kappa_-}{2}\pnorm{f-f_0}^2
 \leq\mathcal R(f)\leq
 \frac{\kappa_+}{2}\pnorm{f-f_0}^2,\qquad
 \pnorm{b'(f)-\mu_0}^2\leq\kappa_+^2\pnorm{f-f_0}^2.
\end{equation}
Thus $f_0$ is the unique population minimizer in this range, up to $P_X$-null sets.
\end{proposition}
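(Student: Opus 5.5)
The plan is to reduce the excess risk to a pointwise Bregman divergence of $b$ and then apply Taylor's theorem on $[-F,F]$.

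First, condition on $X$. Since $X$ is independent of $M$, and $\E(Y\mid M,X=x)=M(x)$ by \eqref{eq:model}, we get $\E(Y\mid X=x)=\E M(x)=\mu_0(x)=b'\{f_0(x)\}$. Integrability is not an issue: $\norm{U}_\infty\le C_U$, $b'$ is bounded on $[-F,F]$, and the moment bound \eqref{eq:bernstein} holds. Because $f$ and $f_0$ are deterministic, the loss is linear in $y$, so
\[
 \mathcal R(f)=\int_{\mathcal X}\Bigl[b\{f(x)\}-b\{f_0(x)\}-b'\{f_0(x)\}\{f(x)-f_0(x)\}\Bigr]\,dP_X(x).
\]
The integrand is exactly the Bregman divergence $D_b(f(x),f_0(x))$.

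Second, fix $x$. Both $f(x)$ and $f_0(x)$ lie in $[-F,F]$, which is convex, and $b$ is $C^2$ on a neighborhood of this interval. Taylor's theorem with Lagrange remainder then gives
\[
 D_b(f(x),f_0(x))=\tfrac12 b''(\xi_x)\{f(x)-f_0(x)\}^2
\]
for some $\xi_x$ between $f(x)$ and $f_0(x)$, hence $|\xi_x|\le F$. By Assumption~\ref{ass:model}, $b''(\xi_x)\in[\kappa_-,\kappa_+]$. This yields the pointwise sandwich
\[
 \tfrac{\kappa_-}{2}(f-f_0)^2\le D_b\le\tfrac{\kappa_+}{2}(f-f_0)^2 .
\]
Integrating against $P_X$ gives the first display of \eqref{eq:curvature}. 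To avoid measurability questions about $x\mapsto\xi_x$, I will use the integral form of the remainder instead,
\[
 D_b=\int_0^1(1-s)\,b''\{f_0+s(f-f_0)\}\,ds\,(f-f_0)^2,
\]
which is jointly measurable and bounded by the same constants because $\int_0^1(1-s)\,ds=1/2$.

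Third, for the mean-scale bound, the mean value theorem gives
\[
 b'\{f(x)\}-\mu_0(x)=b'\{f(x)\}-b'\{f_0(x)\}=b''(\eta_x)\{f(x)-f_0(x)\}
\]
with $|\eta_x|\le F$. Hence $|b'(f)-\mu_0|\le\kappa_+|f-f_0|$ pointwise; squaring and integrating gives the second inequality.

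Finally, for identification: $\mathcal R(f_0)=0$, and the lower bound shows $\mathcal R(f)>0$ unless $\pnorm{f-f_0}=0$. So $f_0$ is the unique minimizer over $\{\norm{f}_\infty\le F\}$ up to $P_X$-null sets. The only delicate step is the first one, namely justifying that $\E\{Y f(X)\}=\E\{\mu_0(X)f(X)\}$. This uses the independence of $X$ and $M$ together with the tower property, applied to the generic draw $(M,X,Y)$. The rest is routine calculus.
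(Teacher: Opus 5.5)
Your proposal is correct and follows the paper's proof essentially step for step: you condition on $X$ to get $\E(Y\mid X=x)=\mu_0(x)$, write the excess risk as an integrated Bregman divergence, and sandwich it using the integral form of the Taylor remainder. You then bound $|b'(f)-\mu_0|\le\kappa_+|f-f_0|$ pointwise (the paper writes this as $d\int_0^1 b''(f_0+td)\,dt$ where you use the mean value theorem, which makes no difference since only the pointwise inequality is integrated), and you deduce uniqueness from the lower curvature bound exactly as the paper does.
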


Let $q_i\geq0$ be a deterministic weight assigned to unit $i$, with $\sum_{i=1}^nq_i=1$. Each observation within that unit receives weight $q_i/m_i$. Equal-unit weighting uses $q_i=1/n$, whereas equal-observation weighting uses $q_i=m_i/N$. For a fixed test function $w\in L_2(P_X)$, define the weighted residual score
\[
 S_{q,w}=\sum_{i=1}^n\frac{q_i}{m_i}\sum_{j=1}^{m_i}
       \{Y_{ij}-\mu_0(X_{ij})\}w(X_{ij}).
\]
\begin{proposition}[Score variance]\label{prop:score}
Under Assumption~\ref{ass:model},
\begin{equation}\label{eq:score}
 \Var(S_{q,w})=A_w\sum_iq_i^2+D_w\sum_i\frac{q_i^2}{m_i},
\end{equation}
where
\[
 A_w=\Var\{\ip{U}{w}\},\qquad
 D_w=\E\!\left[\Var\bigl(\{Y-\mu_0(X)\}w(X)\mid M\bigr)\right].
\]
Here $A_w$ is the variance of the unit-level conditional score mean, and $D_w$ is the average conditional variance of one measurement score. Both are finite under Assumption~\ref{ass:model}. For $m_i=m$ and $q_i=1/n$, \eqref{eq:score} becomes $A_w/n+D_w/(nm)$.
\end{proposition}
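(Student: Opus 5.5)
The plan is to use independence across units together with the law of total variance, conditioning on the unit-specific random function. Write $S_{q,w}=\sum_i q_i Z_i$ with
\[
 Z_i=\frac{1}{m_i}\sum_{j=1}^{m_i}\{Y_{ij}-\mu_0(X_{ij})\}w(X_{ij}).
\]
The records $\cO_i$ are independent across $i$: the $M_i$ are independent, and given them the pairs are independent. Hence $\Var(S_{q,w})=\sum_i q_i^2\Var(Z_i)$, and it suffices to show
\[
 \Var(Z_i)=A_w+D_w/m_i .
\]

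For each unit, decompose $\Var(Z_i)=\Var\{\E(Z_i\mid M_i)\}+\E\{\Var(Z_i\mid M_i)\}$. Given $M_i$, the summands $\xi_{ij}=\{Y_{ij}-\mu_0(X_{ij})\}w(X_{ij})$, $j=1,\dots,m_i$, are i.i.d. They are independent because pairs are conditionally independent given $M_i$. They are identically distributed because $X_{ij}\sim P_X$ is independent of $M_i$ and the same kernel $Q_{M_i,x}$ is used at every measurement. This gives two facts.

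\emph{Conditional mean.} By the tower property over $X_{ij}$, using $\int y\,dQ_{M_i,x}=M_i(x)$,
\[
 \E(\xi_{ij}\mid M_i)=\int\{M_i(x)-\mu_0(x)\}w(x)\,dP_X(x)=\ip{U_i}{w}.
\]
So $\E(Z_i\mid M_i)=\ip{U_i}{w}$, and its variance is $A_w$. The distribution of $U_i$ matches that of $U$.

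\emph{Conditional variance.} By conditional independence, $\Var(Z_i\mid M_i)=m_i^{-1}\Var(\xi_{i1}\mid M_i)$. Taking expectations, and using that $(M_i,X_{i1},Y_{i1})$ has the generic law of $(M,X,Y)$, gives $D_w/m_i$. Summing the two pieces and then over units with weights $q_i^2$ yields \eqref{eq:score}. The balanced case follows by substituting $q_i=1/n$ and $m_i=m$.

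The main obstacle is measure-theoretic rather than algebraic. One must justify finiteness of $A_w$ and $D_w$, and the Fubini-type interchanges, for a general $w\in L_2(P_X)$ that is not bounded.

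For $A_w$, Cauchy–Schwarz gives $|\ip{U}{w}|\le\pnorm{U}\pnorm{w}\le C_U\pnorm{w}$ almost surely, so $A_w\le C_U^2\pnorm{w}^2$.

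For $D_w$, write $Y-\mu_0(X)=\epsilon+U(X)$ with $\epsilon=Y-M(X)$. Then
\[
 \E[\{Y-\mu_0(X)\}^2w(X)^2]\le 2\E[\E(\epsilon^2\mid X,M)\,w(X)^2]+2C_U^2\pnorm{w}^2\le 2(\sigma^2+C_U^2)\pnorm{w}^2,
\]
using \eqref{eq:bernstein} with $k=2$ and independence of $X$ and $M$. This square integrability of $\xi_{ij}$ also licenses the conditional-expectation manipulations above, with joint measurability of $M$ ensuring $\ip{U_i}{w}$ is a random variable. I will state these bounds first and then run the variance decomposition.
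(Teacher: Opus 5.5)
Your proof is correct and follows essentially the same route as the paper: condition on $M_i$, apply the law of total variance to the conditionally i.i.d.\ measurement scores, whose conditional mean is $\ip{U_i}{w}$ and whose unit-average conditional variance is $\Var(\xi_{i1}\mid M_i)/m_i$, then sum over the independent units with weights $q_i^2$. The only cosmetic difference is in the finiteness step. The paper notes that the cross term $U_i(X_{ij})\epsilon_{ij}$ vanishes by conditional centering and obtains the constant $C_U^2+\sigma^2$, whereas you use $(a+b)^2\le 2a^2+2b^2$ and also bound $A_w$ explicitly via Cauchy--Schwarz.
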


The design in \eqref{eq:model} describes random covariate locations independent of unit heterogeneity. Common scheduled visits, covariates shared across all measurements of a unit, and informative visit processes require corresponding changes to the design assumptions.

\section{Network estimator}\label{sec:estimation}
\subsection{The network class}
Let $\varrho(t)=\max\{t,0\}$ be the rectified linear unit (ReLU) activation, applied coordinatewise to vectors. A feedforward network with $L\geq1$ hidden layers has width vector
\[
 \mathbf p=(p_0,p_1,\ldots,p_L,p_{L+1}),\qquad p_0=d,\quad p_{L+1}=1.
\]
Thus $p_\ell$ is the number of neurons in hidden layer $\ell$ for $\ell=1,\ldots,L$. For each $\ell=1,\ldots,L+1$, let $W_\ell\in\mathbb R^{p_\ell\times p_{\ell-1}}$ be a weight matrix and $a_\ell\in\mathbb R^{p_\ell}$ a bias vector. Write $\theta$ for the vector formed by all entries of these matrices and vectors. The network is defined recursively by
\[
 \begin{aligned}
 z_0(x)&=x,\\
 z_\ell(x)&=\varrho\{W_\ell z_{\ell-1}(x)+a_\ell\},
       &&\ell=1,\ldots,L,\\
 f_\theta^{\rm raw}(x)&=W_{L+1}z_L(x)+a_{L+1}.
 \end{aligned}
\]
The output layer is scalar and affine. Define the clipping map
\[
 \operatorname{clip}_F(t)=\max\{-F,\min\{t,F\}\},\qquad
 f_\theta(x)=\operatorname{clip}_F\{f_\theta^{\rm raw}(x)\}.
\]
Here $F$ is the bound in Assumption~\ref{ass:model}. Clipping is a fixed output operation and introduces no trainable parameters.

For an integer sparsity budget $s\geq0$ and a parameter bound $B>0$, define
\[
 \cF(L,\mathbf p,s,B,F)
 =
 \left\{f_\theta:\ \|\theta\|_0\leq s,\ \|\theta\|_\infty\leq B\right\}.
\]
The quantity $\|\theta\|_0$ counts the nonzero entries of $\theta$, including both weights and biases, and $\|\theta\|_\infty$ is the largest absolute parameter value. Consequently, $L$ and $\mathbf p$ specify the architecture, $s$ restricts the number of active parameters, $B$ restricts their magnitudes, and $F$ bounds the fitted link-scale function. The zero pattern is allowed to vary within the class. A network with fewer active connections is represented by setting the remaining weights to zero.

\subsection{Weighted empirical risk minimization}
We use $\cF$ as shorthand for a chosen deterministic class $\cF(L,\mathbf p,s,B,F)$. Its architecture and parameter bounds may depend on the sample sizes. The oracle inequality in Section~\ref{sec:risk} also applies to other function classes satisfying its stated conditions.

For the unit weights $q_i$ defined in Section~\ref{sec:model}, define the weighted empirical loss by
\begin{equation}\label{eq:weightedloss}
 \widehat L_q(f)=\sum_i\frac{q_i}{m_i}\sum_j
     \{b(f(X_{ij}))-Y_{ij}f(X_{ij})\}.
\end{equation}
Let $\widehat f\in\cF$ be a measurable approximate minimizer satisfying
\begin{equation}\label{eq:erm}
 \widehat L_q(\widehat f)\leq\inf_{f\in\cF}\widehat L_q(f)+\Delta,
 \qquad \Delta\geq0,\quad \E\Delta<\infty.
\end{equation}
The nonnegative, possibly random variable $\Delta$ measures the optimization error relative to the smallest empirical loss in $\cF$; $\Delta=0$ corresponds to an exact minimizer. The marginal mean estimator is $\widehat\mu(x)=b'\{\widehat f(x)\}$. For a fixed finite architecture with bounded parameters, the parameter set is a finite union of compact sets, so the empirical minimum exists. We use a measurable minimizer or measurable approximate minimizer. The parameter representation also makes the suprema in the proofs measurable.

\subsection{Class complexity and weights}
For $\varepsilon>0$, let $N(\varepsilon,\cF,\|\cdot\|_\infty)$ be the smallest number of functions $f_1,\ldots,f_K\in\cF$ such that every $f\in\cF$ satisfies $\min_{1\leq k\leq K}\|f-f_k\|_\infty\leq\varepsilon$. This is the covering number of $\cF$ in the uniform norm. Its logarithm,
\[
 H(\varepsilon)=\log N(\varepsilon,\cF,\norm{\cdot}_\infty),
\]
is the metric entropy at resolution $\varepsilon$. The symbol $N$ with these three arguments denotes a covering number; the scalar $N=\sum_i m_i$ denotes the total sample size. All logarithms are natural.

Three summaries of the observation weights enter the risk bounds:

\begin{equation}\label{eq:weights}
 A_q=\sum_iq_i^2,\qquad
 V_q=\sum_i\frac{q_i^2}{m_i},\qquad
 w_q=\max_i\frac{q_i}{m_i}.
\end{equation}
The quantities $A_q$ and $V_q$ occur in the exact score variance. The quantity $w_q$ controls the linear term in the conditional Bernstein inequality.

\section{Estimation bounds}\label{sec:risk}
\subsection{Oracle inequality}
The bound below separates approximation error, between-unit variation, conditional measurement variation, and optimization error. Uniform separability means that $\cF$ has a countable dense subset in the uniform norm; the finite-architecture network classes above have this property.
\begin{theorem}[Weighted oracle inequality]\label{thm:oracle}
Suppose \eqref{eq:model} and Assumption~\ref{ass:model} hold, $\cF$ is nonempty, separable in the uniform norm, and uniformly bounded by $F$, and $H(\varepsilon)<\infty$. Then the estimator in \eqref{eq:erm} satisfies, for $0<\varepsilon\leq1$,
\begin{align}
 \E\pnorm{\widehat f-f_0}^2
 \leq C\bigg[&
 \inf_{f\in\cF}\pnorm{f-f_0}^2+\tau^2 A_q \notag\\
 &+(V_q+w_q)\{H(\varepsilon)+1\}
       +\varepsilon+\E\Delta\bigg].\label{eq:oracle}
\end{align}
The same bound holds for $\E\pnorm{\widehat\mu-\mu_0}^2$, with a different constant. The constants depend on $F,C_U,\sigma,K_\epsilon,b,\kappa_-$ and $\kappa_+$, but not on $n$, $m_i$, $q_i$, or the class complexity.
\end{theorem}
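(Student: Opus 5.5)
We reduce the oracle inequality to a basic inequality and then split the empirical process into two parts. One part is a between-unit term, which is linear in $f-f_0$ and only of order $\tau^2 A_q$. The other is a conditional measurement term, which we control uniformly over an $\varepsilon$-net with a Bernstein bound conditional on $M_1,\ldots,M_n$ and the covariates.

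Fix an oracle $f^*\in\cF$ with $\pnorm{f^*-f_0}^2\leq\inf_{f\in\cF}\pnorm{f-f_0}^2+\varepsilon$. For any $f$, write $\widehat L_q(f)-\widehat L_q(f_0)=\widehat R_q(f)-S(f)$. Here
\[
\widehat R_q(f)=\sum_i\frac{q_i}{m_i}\sum_j\bigl[b(f(X_{ij}))-b(f_0(X_{ij}))-\mu_0(X_{ij})\{f(X_{ij})-f_0(X_{ij})\}\bigr]
\]
is the "curvature part", and $S(f)=\sum_i\frac{q_i}{m_i}\sum_j\{Y_{ij}-\mu_0(X_{ij})\}(f-f_0)(X_{ij})$ is the score with test function $f-f_0$. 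By Assumption~\ref{ass:model} and Taylor's theorem, each summand of $\widehat R_q$ lies between $\frac{\kappa_-}{2}(f-f_0)^2$ and $\frac{\kappa_+}{2}(f-f_0)^2$ at the design point. Its expectation is $\mathcal R(f)$, since $\E(Y\mid X)=\mu_0(X)$. From \eqref{eq:erm} we get the basic inequality
\[
\widehat R_q(\widehat f)\leq \widehat R_q(f^*)+S(\widehat f)-S(f^*)+\Delta.
\]

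Next decompose $Y_{ij}-\mu_0(X_{ij})=U_i(X_{ij})+\epsilon_{ij}$, which splits $S(f)=S^U(f)+S^\epsilon(f)$.

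\emph{Between-unit part.} Write $g=f-f_0$. Further split $U_i(X_{ij})g(X_{ij})$ into two pieces: $\ip{U_i}{g}$, and the centered remainder $U_i(X_{ij})g(X_{ij})-\ip{U_i}{g}$, which is conditionally mean zero given $M_i$.
\begin{itemize}
\item The first piece gives $\sum_iq_i\ip{U_i}{g}=\ip{\bar U_q}{g}$ with $\bar U_q=\sum_iq_iU_i$. By Cauchy--Schwarz,
\[
|\ip{\bar U_q}{g}|\leq \pnorm{\bar U_q}\pnorm{g}\leq \frac{\kappa_-}{8}\pnorm{g}^2+C\pnorm{\bar U_q}^2,
\]
and $\E\pnorm{\bar U_q}^2=\tau^2A_q$ by independence across units. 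This is exactly the $\tau^2A_q$ term. It does not pay for entropy, because it is linear in a single random element of $L_2(P_X)$.
\item The remainder is, conditionally on $M$, an average of independent bounded terms (bounded by $2C_U F$) with weights $q_i/m_i$. Together with $S^\epsilon$, it is handled in the next step.
\end{itemize}

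\emph{Measurement part.} Conditionally on $M_1,\ldots,M_n$, the terms $\epsilon_{ij}g(X_{ij})+\{U_ig-\ip{U_i}{g}\}$ are independent across $(i,j)$ and centered. Their weighted variance is at most $C\,V_q\pnorm{g}^2$ (using $\E(g(X)^2\mid M)=\pnorm g^2$), and by \eqref{eq:bernstein} their Bernstein scale is at most $C\,w_q$. Bernstein's inequality over an $\varepsilon$-net $\{f_k\}$ of size $e^{H(\varepsilon)}$, combined with a union bound, then gives with high probability
\[
|T(f_k)|\lesssim \sqrt{V_q\pnorm{f_k-f_0}^2\{H+t\}}+w_q\{H+t\}\quad\text{simultaneously for all } k.
\]
Approximating $\widehat f$ by its nearest net point costs $O(\varepsilon)$ times $\sum_i\frac{q_i}{m_i}\sum_j|Y_{ij}-\mu_0|$, which has bounded expectation; here separability makes the supremum measurable. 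The square-root term is absorbed through $ab\leq\frac{\kappa_-}{8}a^2+Cb^2$. Integrating the tail in $t$ yields $C(V_q+w_q)(H(\varepsilon)+1)$. The $f^*$ terms are single-function, so they have expectation zero or are bounded by the same quantities.

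The main obstacle is passing from the empirical curvature $\widehat R_q(\widehat f)$ back to the population norm $\pnorm{\widehat f-f_0}^2$ uniformly over $\cF$, since the design points within a unit are weighted, though independent. The plan is to apply a second Bernstein plus union bound to $\widehat R_q(f_k)-\mathcal R(f_k)$. Its summands are bounded by $CF^2$, and their variance is $\lesssim\pnorm{f_k-f_0}^2$, so the variance proxy is $\lesssim V_q\pnorm{g}^2$ with scale $w_q$. The resulting ratio-type bound is $\mathcal R(f)\leq 2\widehat R_q(f)+C(V_q+w_q)(H+t)+C\varepsilon$. Combining this with Proposition~\ref{prop:curvature}, namely $\frac{\kappa_-}{2}\pnorm{g}^2\leq\mathcal R(f)$ and $\E\widehat R_q(f^*)=\mathcal R(f^*)\leq\frac{\kappa_+}{2}\pnorm{f^*-f_0}^2$, and then taking expectations, gives \eqref{eq:oracle}. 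The $\widehat\mu$ statement follows from the second inequality in \eqref{eq:curvature}.
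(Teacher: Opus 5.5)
Your argument is correct, but it is organized differently from the paper's. The paper compares with a deterministic $g\in\cF$ and centers the whole loss difference $G_{ij}(f,g)$ conditionally on $\mathcal M=\sigma(M_1,\ldots,M_n)$ alone. Since $\E_{\mathcal M}G_{ij}(f,g)=\mathcal R(f)-\mathcal R(g)-\ip{U_i}{f-g}$, this gives the exact identity $\widehat L_q(f)-\widehat L_q(g)=\mathcal R(f)-\mathcal R(g)-\ip{\overline U_q}{f-g}+Z_q(f,g)$. The population risk therefore enters the basic inequality directly. Three fluctuations then sit in the single conditionally centered process $Z_q$, which is handled by one moment lemma and one offset net bound (Lemmas~\ref{lem:moments} and \ref{lem:net}): the design fluctuation of the curvature part, the fluctuation of $U_i(X_{ij})(f-g)(X_{ij})$ about $\ip{U_i}{f-g}$, and the noise $\epsilon_{ij}(f-g)(X_{ij})$. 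The step you call the main obstacle never arises there.

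You instead center at $f_0$ and split into the empirical curvature $\widehat R_q$ and the score. You control the score conditionally on $\mathcal M$, as the paper does. You then bound $\widehat R_q-\mathcal R$ by a second, unconditional Bernstein-plus-net argument, used one-sidedly in ratio form; this is legitimate because the $X_{ij}$ are i.i.d.\ $P_X$ and independent of the $M_i$. Formally, \eqref{eq:Z} with $g$ replaced by $f_0$ equals your $\widehat R_q(f)-\mathcal R(f)$ minus your measurement process, so both proofs control the same fluctuations. Yours costs an extra union bound and a factor $2$, but it makes the between-unit versus measurement split visible at the level of the score. The paper's route is shorter.

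Three small points to tidy up.

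\emph{Absorption constants.} Because of the factor $2$ in $\mathcal R(f)\leq2\widehat R_q(f)+\cdots$, using $\kappa_-/8$ in Young's inequality for both the $\ip{\overline U_q}{g}$ term and the square-root term leaves $2(\kappa_-/8+\kappa_-/8)\pnorm{g}^2=\frac{\kappa_-}{2}\pnorm{g}^2$ on the right. That only matches, rather than beats, the lower bound $\frac{\kappa_-}{2}\pnorm{g}^2\leq\mathcal R(\widehat f)$. Use a smaller constant such as $\kappa_-/16$.

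\emph{From high probability to expectation.} State both net bounds as bounds on $\E\sup_{f\in\cF}[|T(f)|-\eta\pnorm{f-f_0}^2]_+$ and $\E\sup_{f\in\cF}[\mathcal R(f)-2\widehat R_q(f)]_+$, as in Lemma~\ref{lem:net}. The final inequality is linear in these suprema, so expectations pass through cleanly.

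\emph{Conditioning.} Your opening sentence says the Bernstein bound is conditional on $M_1,\ldots,M_n$ \emph{and the covariates}. That would fail, because $U_i(X_{ij})g(X_{ij})-\ip{U_i}{g}$ is centered only after integrating over $X_{ij}$. The body of your argument correctly conditions on $\mathcal M$ alone.
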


Theorem~\ref{thm:oracle} holds for a general covariate distribution $P_X$. The proof uses a finite net for the conditionally independent measurement process and a Hilbert-space bound for the unit process. The latter is
\[
 \E\left\|\sum_i q_i U_i\right\|_{P_X}^2=\tau^2 A_q.
\]
Thus the between-unit contribution is $\tau^2A_q$, while the class entropy enters the conditional measurement term. The infimum in \eqref{eq:oracle} is the squared approximation error for $f_0$ over $\cF$.

\begin{corollary}[Balanced measurements]\label{cor:balanced}
For $m_i=m$, $q_i=1/n$, and $N=nm$,
\begin{equation}\label{eq:balanced}
 \E\pnorm{\widehat\mu-\mu_0}^2
 \leq C\left[
 \inf_{f\in\cF}\pnorm{f-f_0}^2+
 \frac{\tau^2}{n}+\frac{H(\varepsilon)+1}{N}
 +\varepsilon+\E\Delta\right].
\end{equation}
\end{corollary}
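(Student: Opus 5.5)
This is a direct specialization of Theorem~\ref{thm:oracle}. The plan is to plug the balanced design into the weight summaries \eqref{eq:weights}, and then transfer the link-scale bound to the mean scale using Proposition~\ref{prop:curvature}.

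\textbf{Step 1: weight summaries.} With $m_i=m$ and $q_i=1/n$ (so $\sum_i q_i=1$), we get
\[
 A_q=\sum_{i=1}^n n^{-2}=\frac1n,\qquad
 V_q=\sum_{i=1}^n\frac{1}{n^2m}=\frac{1}{nm}=\frac1N,\qquad
 w_q=\max_i\frac{1}{nm}=\frac1N .
\]
Hence $\tau^2A_q=\tau^2/n$ and $(V_q+w_q)\{H(\varepsilon)+1\}=2\{H(\varepsilon)+1\}/N$. The factor $2$ is absorbed into $C$.

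\textbf{Step 2: the link-scale bound.} Substituting into \eqref{eq:oracle} gives the bound for $\E\pnorm{\widehat f-f_0}^2$ with the right-hand side of \eqref{eq:balanced}.

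\textbf{Step 3: transfer to the mean scale.} Since $\widehat f\in\cF$ is clipped, $\norm{\widehat f}_\infty\leq F$. The second inequality in \eqref{eq:curvature} therefore applies pointwise in the sample:
\[
 \pnorm{\widehat\mu-\mu_0}^2=\pnorm{b'(\widehat f)-\mu_0}^2\leq\kappa_+^2\pnorm{\widehat f-f_0}^2 .
\]
Taking expectations and enlarging the constant to $C\kappa_+^2$ yields \eqref{eq:balanced}. This is the same route as the mean-scale statement already included in Theorem~\ref{thm:oracle}.

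\textbf{Constants and measurability.} The constant still depends only on $F,C_U,\sigma,K_\epsilon,b,\kappa_\pm$, as in Theorem~\ref{thm:oracle}. Measurability of $\pnorm{\widehat\mu-\mu_0}^2$ follows from measurability of $\widehat f$.

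No step is a real obstacle. The only points to check are that clipping guarantees the range condition needed for Proposition~\ref{prop:curvature}, and that $V_q$ and $w_q$ are both of exact order $1/N$, so the conditional term does not degrade in the balanced case.
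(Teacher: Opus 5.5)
Your proposal is correct and follows the paper's own proof: the paper also computes $A_q=1/n$ and $V_q=w_q=1/(nm)$ and substitutes these into \eqref{eq:oracle}. It obtains the mean-scale version from the second inequality in \eqref{eq:curvature}, exactly as in your Step~3; since Theorem~\ref{thm:oracle} already states the bound for $\widehat\mu$, that step only re-derives a part of the theorem.
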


\subsection{Unequal cluster sizes}
Theorem~\ref{thm:oracle} treats unequal deterministic $m_i$ without a comparability restriction. For equal-unit weights, define
\[
 m_H=\left(\frac1n\sum_i m_i^{-1}\right)^{-1},\qquad
 m_{\min}=\min_{1\leq i\leq n}m_i,\qquad m_{\max}=\max_{1\leq i\leq n}m_i.
\]
Here $m_H$ is the harmonic mean of the cluster sizes, and $m_{\min}$ and $m_{\max}$ are their minimum and maximum. Equal-observation weighting assigns $q_i=m_i/N$, with $N=\sum_i m_i$.

\begin{corollary}[Two weighting choices]\label{cor:weights}
The stochastic terms in \eqref{eq:oracle} take the following forms:
\begin{align}
 q_i=1/n:\quad&
 \frac{\tau^2}{n}+
 \left(\frac1{nm_H}+\frac1{nm_{\min}}\right)\{H(\varepsilon)+1\},
 \label{eq:equalunits}\\
 q_i=m_i/N:\quad&
 \tau^2\frac{\sum_i m_i^2}{N^2}
       +\frac{H(\varepsilon)+1}{N},
 \label{eq:equalobs}
\end{align}
up to fixed multiplicative constants.
\end{corollary}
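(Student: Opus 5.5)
The plan is to substitute the two weight choices directly into the summaries $A_q$, $V_q$ and $w_q$ of \eqref{eq:weights}, and then read off the stochastic terms $\tau^2A_q+(V_q+w_q)\{H(\varepsilon)+1\}$ of Theorem~\ref{thm:oracle}. Since the theorem already holds for arbitrary deterministic weights with constants free of $n$, $m_i$ and $q_i$, no new probabilistic argument is needed. The work is purely algebraic, together with the comparison of $w_q$ to $V_q$ in each case.

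For equal-unit weights $q_i=1/n$ the computations are as follows:
\[
 A_q=\sum_i n^{-2}=\frac1n,\qquad
 V_q=\frac1{n^2}\sum_i\frac1{m_i}=\frac1{n\,m_H},\qquad
 w_q=\max_i\frac{1}{nm_i}=\frac1{n\,m_{\min}}.
\]
The second identity follows from the definition of the harmonic mean, $\sum_i m_i^{-1}=n/m_H$. Inserting these three quantities into \eqref{eq:oracle} gives \eqref{eq:equalunits} exactly. Here $w_q$ is kept as a separate term, because $1/(nm_{\min})$ need not be dominated by $1/(nm_H)$ when the cluster sizes are very unbalanced.

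For equal-observation weights $q_i=m_i/N$ the computations are as follows:
\[
 A_q=\frac{\sum_i m_i^2}{N^2},\qquad
 V_q=\sum_i\frac{m_i^2}{N^2 m_i}=\frac{\sum_i m_i}{N^2}=\frac1N,\qquad
 w_q=\max_i\frac{m_i}{N m_i}=\frac1N.
\]
Hence $V_q+w_q=2/N$. The factor $2$ is absorbed into the multiplicative constant, which yields \eqref{eq:equalobs}.

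The only point needing care, and it is a mild one, is the phrase ``up to fixed multiplicative constants.'' For $q_i=m_i/N$ the constant $2$ is absorbed. For $q_i=1/n$ no absorption is needed, since the display lists both $V_q$ and $w_q$. No other terms of \eqref{eq:oracle} change, and the approximation, $\varepsilon$ and $\E\Delta$ terms carry over verbatim. I would also check that both choices satisfy $q_i\ge0$ and $\sum_iq_i=1$, as the hypotheses of Theorem~\ref{thm:oracle} require; this is immediate.
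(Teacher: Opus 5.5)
Your proposal is correct and matches the paper's proof, which likewise computes $A_q$, $V_q$ and $w_q$ for $q_i=1/n$ and for $q_i=m_i/N$, obtains the same values you do, and substitutes them into \eqref{eq:oracle}. As a side note, $1/(nm_H)\leq 1/(nm_{\min})$ always holds, so under equal-unit weights the $w_q$ term in fact dominates $V_q$.
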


The harmonic mean describes the measurement variance under equal-unit weighting. The maximum weight remains in our risk bound even for bounded responses, because the measurement locations are random. If $m_{\max}/m_{\min}$ is bounded, the two measurement terms in \eqref{eq:equalunits} are comparable. Under equal-observation weighting, the between-unit term can be larger than $\tau^2/n$. Under \eqref{eq:model}, both weighting choices have the same population mean target: every marginal measurement has the same law. Informative cluster sizes are not covered by this conclusion.

\subsection{ReLU rates}
Let $\beta>0$ be a smoothness index, $t\geq1$ a dimension, and $R>0$ a radius. On a specified bounded rectangle $D\subset\mathbb R^t$, write $\beta=k+\alpha$, where $k=\lceil\beta\rceil-1$ and $\alpha\in(0,1]$. For a multi-index $\nu=(\nu_1,\ldots,\nu_t)$ of nonnegative integers, let $|\nu|=\sum_{a=1}^t\nu_a$ and let $D^\nu g$ denote the corresponding partial derivative, with $D^0g=g$. The H\"older class $\mathcal H^\beta_t(R)$ consists of functions with continuous partial derivatives through order $k$ satisfying
\[
 \sup_{x\in D}|D^\nu g(x)|\leq R\quad (|\nu|\leq k),
 \qquad
 |D^\nu g(x)-D^\nu g(y)|\leq R\|x-y\|_\infty^\alpha
 \quad (|\nu|=k,\ x,y\in D).
\]
For vectors, $\|x-y\|_\infty=\max_a|x_a-y_a|$. When $k=0$, these conditions bound the function itself and its H\"older increments. The domain $D$ is understood from context; for $f_0$, it is $[0,1]^d$.

Let $q\geq0$ be a fixed integer specifying $q+1$ component maps; this composition index is distinct from the unit weights $q_i$. Consider functions of the form
\begin{equation}\label{eq:composition}
 f_0=g_q\circ\cdots\circ g_0.
\end{equation}
For $u=0,\ldots,q$, the map $g_u$ takes a fixed bounded rectangle in $\RR^{d_u}$ into the domain of the next map, with $d_0=d$ and $d_{q+1}=1$. Each coordinate function of $g_u$ depends on at most $t_u$ of its input coordinates, where $1\leq t_u\leq d_u$, and belongs to $\mathcal H^{\beta_u}_{t_u}(R)$ as a function of those coordinates. The index $\beta_u>0$ specifies its smoothness. The dimensions, radii, and number of maps are fixed as the sample size grows. Define the effective smoothness indices $\beta_u^*$ and the rate quantity $\phi_N$ by
\begin{equation}\label{eq:phin}
 \beta_u^*=\beta_u\prod_{v=u+1}^q(\beta_v\wedge1),\qquad
 \phi_N=\max_{0\leq u\leq q}N^{-2\beta_u^*/(2\beta_u^*+t_u)}.
\end{equation}

Here $a\wedge b=\min\{a,b\}$, and an empty product equals one, so $\beta_q^*=\beta_q$. Let $p_{\max}=\max_{0\leq\ell\leq L+1}p_\ell$ be the largest layer width. We use $a_N\lesssim b_N$ for an inequality up to a constant independent of the sample sizes, $a_N\gtrsim b_N$ for the reverse inequality, and $a_N\asymp b_N$ when both hold.

\begin{corollary}[Network rates]\label{cor:rates}
Under the balanced model, suppose $f_0$ satisfies \eqref{eq:composition}. For sufficiently large fixed constants, choose a network class containing the approximation constructed in Appendix~\ref{app:rates}, with
\[
 L\asymp\log N,\qquad
 \min_{1\leq u\leq L}p_u\gtrsim N\phi_N,\qquad
 p_{\max}\leq C N^{c},\qquad
 s\asymp N\phi_N\log N,\qquad B_0\leq B\leq C N^c.
\]
Here $N=nm$, $B_0>0$ is a fixed constant determined by the component domains, and $c,C>0$ in the architecture bounds are fixed constants. If $\E\Delta\lesssim\phi_N$, then
\begin{equation}\label{eq:rate}
 \E\pnorm{\widehat\mu-\mu_0}^2
 \leq C\{\tau^2/n+\phi_N(\log N)^3\}.
\end{equation}
In particular, for $f_0\in\mathcal H^\beta_d(R)$,
\begin{equation}\label{eq:holderrate}
 \E\pnorm{\widehat\mu-\mu_0}^2
 \leq C\{\tau^2/n+(nm)^{-2\beta/(2\beta+d)}[\log(nm)]^3\}.
\end{equation}
\end{corollary}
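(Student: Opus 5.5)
The plan is to plug two standard ingredients into Corollary~\ref{cor:balanced}: an approximation bound for the compositional class and an entropy bound for sparse ReLU networks. Then choose $\varepsilon$ so that it does not dominate.

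\textbf{Approximation.} Invoke the construction of \citet{schmidthieber2020} (Theorem~5 and the proof of its Theorem~1), which goes through \citet{yarotsky2017}-type local Taylor approximations. For each component $g_u$ it gives a sparse ReLU network approximating $g_u$ uniformly to accuracy $N^{-\beta_u^*/(2\beta_u^*+t_u)}$, up to constants. The network has depth $\asymp\log N$, width $\asymp N\phi_N$ and sparsity $\asymp N\phi_N\log N$, and its weights are bounded by a constant $B_0$ after rescaling the component domains. The component networks are stacked, with the rescaling to unit cubes absorbed into the affine layers. H\"older propagation $|h(x)-h(y)|\le R|x-y|^{\beta\wedge1}$ of the outer maps then turns the component errors into
\[
\|\tilde f-f_0\|_\infty^2\lesssim\max_u N^{-2\beta_u^*/(2\beta_u^*+t_u)}=\phi_N .
\]
Because $\|f_0\|_\infty\le F$, clipping at $F$ does not increase this error. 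Hence $\inf_{f\in\cF}\pnorm{f-f_0}^2\lesssim\phi_N$, and the infimum is attained within the stated class by its containment assumption.

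\textbf{Entropy.} Use the covering bound for sparse bounded-weight networks, Lemma~5 of \citet{schmidthieber2020}:
\[
H(\varepsilon)\le (s+1)\log\bigl(2\varepsilon^{-1}(L+1)V^2\bigr),\qquad V=\prod_{\ell}(p_\ell+1)\,\max(B,1)^{L+1}.
\]
That lemma assumes $B\le1$, so with $B\le CN^c$ the parameter-perturbation argument has to be redone. A perturbation of size $\delta$ in the weights moves the output by at most $\delta (L+1)\prod_\ell\{(p_\ell+1)B\}$. This gives $\log V\lesssim L\log(p_{\max}B)\lesssim(\log N)^2$. Clipping is $1$-Lipschitz, so it does not increase covering numbers. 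Taking $\varepsilon=\phi_N$, which is at least $N^{-1}$, yields
\[
H(\varepsilon)\lesssim s\{(\log N)^2+\log N\}\lesssim N\phi_N(\log N)^3 .
\]
This is the origin of the cubic logarithm.

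\textbf{Combination.} Substitute these bounds into \eqref{eq:balanced}, using $\E\Delta\lesssim\phi_N$:
\[
\E\pnorm{\widehat\mu-\mu_0}^2\lesssim \phi_N+\tau^2/n+\frac{N\phi_N(\log N)^3+1}{N}+\phi_N ,
\]
which is \eqref{eq:rate}. The H\"older statement \eqref{eq:holderrate} is the special case $q=0$, $t_0=d$, $\beta_0^*=\beta$.

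\textbf{Main obstacle.} The hard step is the bookkeeping. The approximating network has to fit inside the given architecture: widths larger than needed are handled by zero padding, and the depth is matched by identity layers, since $\varrho(x)-\varrho(-x)=x$ costs sparsity only $O(L)$. We also need to check that the weight bound $B_0$ is independent of $N$, and that the entropy exponent with $B$ growing polynomially still gives $(\log N)^2$ per parameter, not something worse. The first thing to try is to follow the Schmidt-Hieber construction verbatim and only track the constants.
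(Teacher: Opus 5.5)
Your proposal follows essentially the paper's route. You plug the Schmidt-Hieber compositional approximation ($\inf_{f\in\cF}\pnorm{f-f_0}^2\lesssim\phi_N$) and an entropy bound of order $sL\log N\asymp N\phi_N(\log N)^3$ into Corollary~\ref{cor:balanced}; the paper's Lemma~\ref{lem:entropy} is exactly your redone Lemma~5 with general $B$, and choosing $\varepsilon=\phi_N$ instead of the paper's $\varepsilon=N^{-2}$ is immaterial. One bookkeeping slip needs correcting: because propagation through the outer maps raises $\delta_u$ to the power $\prod_{v>u}(\beta_v\wedge1)$, the per-component accuracy must be $\delta_u\lesssim(N\phi_N)^{-\beta_u/t_u}\leq N^{-\beta_u/(2\beta_u^*+t_u)}$, not $N^{-\beta_u^*/(2\beta_u^*+t_u)}$ as you wrote. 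The width $N\phi_N$ you chose already delivers this sharper accuracy, so your conclusion stands.
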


The approximation input is the scalar ReLU theorem of \citet[Theorem 5]{schmidthieber2020}. Appendix~\ref{app:rates} gives the componentwise construction, error propagation, and entropy calculation needed here. The rate for the general compositional class is an upper bound; the lower bound below concerns ordinary H\"older classes.

For fixed positive $\tau$, balancing the two polynomial terms in \eqref{eq:holderrate} gives $m\asymp n^{d/(2\beta)}$. Logarithmic factors modify the boundary. This comparison describes the worst-case sampling-frequency transition in the stated class. If $\tau=0$, the bound has the ordinary independent-measurement form.

\subsection{Lower bound}
For each response family, we now specify a collection of data-generating distributions over which the minimax risk is taken. Take $P_X$ uniform on $[0,1]^d$, fix $\beta,R,F>0$, and restrict $f_0$ to $\mathcal H^\beta_d(R)$ and $[-F,F]$. Conditional laws in \eqref{eq:model} are Bernoulli with means in $[c,1-c]$, Poisson with means in $[c,C]$, or Gaussian with variance one and means in $[-C_G,C_G]$, where $C_G>0$. Take $0<c<1/2$ for Bernoulli and $0<c<1<C$ for Poisson. Fix positive upper bounds on $\norm{U}_\infty$ and $\E\pnorm{U}^2$; the class includes arbitrarily small nonzero random intercepts. Denote the resulting family by $\mathfrak P_{\beta,d}$, separately for each response type. All its members satisfy Assumption~\ref{ass:model} with uniform constants. For $P\in\mathfrak P_{\beta,d}$, let $\mu_P(x)=\E_P M(x)$ be its marginal mean, and let $\E_P$ denote expectation under the resulting observed-data law. Here $P$ indexes the full data-generating mechanism, whereas $P_X$ denotes only the covariate distribution.

\begin{theorem}[H\"older lower bound]\label{thm:lower}
For each of these three families and all $n,m\geq1$, there is a constant $c_0>0$, depending only on the fixed class constants, such that
\begin{equation}\label{eq:lower}
 \inf_{\widetilde\mu}\sup_{P\in\mathfrak P_{\beta,d}}
 \E_P\pnorm{\widetilde\mu-\mu_P}^2
 \geq c_0\{n^{-1}+(nm)^{-2\beta/(2\beta+d)}\}.
\end{equation}
The infimum is over all measurable estimators $\widetilde\mu$ based on the $n$ units and their $m$ measurements per unit. Together with \eqref{eq:holderrate}, this establishes the H\"older minimax rate up to logarithms.
\end{theorem}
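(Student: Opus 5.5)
Since $\max\{a,b\}\geq (a+b)/2$, it suffices to prove the two lower bounds $c\,n^{-1}$ and $c\,(nm)^{-2\beta/(2\beta+d)}$ separately, each over a subfamily of $\mathfrak P_{\beta,d}$. Each bound comes from its own construction: a two-point (Le Cam) argument on a random intercept for the $n^{-1}$ term, and a Fano / Assouad argument with $\tau=0$ for the nonparametric term. Throughout, $P_X$ is uniform on $[0,1]^d$. Each family's KL divergence is locally quadratic in the mean, because the means lie in $[c,1-c]$, $[c,C]$, or $[-C_G,C_G]$. So for single observations with means $a,a'$ in the allowed range, $\KL\leq C(a-a')^2$ uniformly. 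I use this bound freely below.

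\textbf{Between-unit term.} I take $f_0$ constant and $M_i(x)=\mu+\delta\xi_i$, where the $\xi_i$ are i.i.d.\ and the law of $\xi_i$ depends on a parameter. The cleanest choice is $\xi_i\in\{-1,1\}$ with $\PP(\xi_i=1)=(1+s\,\eta)/2$, for $s\in\{-1,+1\}$ and fixed small $\delta$. This gives marginal mean $\mu_s=\mu+s\,\delta\eta$, so the two hypotheses are separated by $|\mu_+-\mu_-|^2=4\delta^2\eta^2$ in $\pnorm{\cdot}^2$. Unit records are i.i.d.\ across units. Each record is a measurable function of $\xi_i$ plus independent randomness with the same kernel under both hypotheses. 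By data processing, the KL divergence between the two record laws is therefore at most $\KL(\mathrm{Bern}((1+\eta)/2)\,\|\,\mathrm{Bern}((1-\eta)/2))\leq C\eta^2$, whatever $m$ is. Taking $\eta=n^{-1/2}$ makes the total KL over $n$ units $O(1)$, and Le Cam's lemma then gives risk $\gtrsim\delta^2/n$. I also need to check membership in the family. The mean ranges hold for small $\delta$. The bounds $\norm{U}_\infty\leq 2\delta$ and $\E\pnorm{U}^2\leq\delta^2$ are within the fixed caps. The constant $f_0=(b')^{-1}(\mu_s)$ lies in $\mathcal H^\beta_d(R)$ and in $[-F,F]$ once $\mu$ is chosen interior.

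\textbf{Nonparametric term.} I set $U\equiv 0$, so that all $nm$ observations are independent draws from an ordinary regression model with mean $\mu_0$. This is allowed because $U=0$ satisfies the upper bounds; if strictly nonzero random intercepts are required, I add an intercept $\pm\delta'$ with probability $1/2$ and $\delta'\to0$, which does not change the argument. I use the standard bump construction. Take $h=(nm)^{-1/(2\beta+d)}$ and a grid of $K\asymp h^{-d}$ disjoint cubes. Set $f_\omega=f^\ast+\sum_k\omega_k L h^\beta\psi((x-x_k)/h)$ for $\omega\in\{0,1\}^K$, with $\psi$ a smooth compactly supported bump and $L$ small. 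Then $f_\omega\in\mathcal H^\beta_d(R)$ and $f_\omega\in[-F,F]$. I use the mean scale $\mu_\omega=b'(f_\omega)$, which is legitimate because $b'$ is bi-Lipschitz on $[-F,F]$ by Assumption~\ref{ass:model}. Hypotheses differing in one coordinate are then separated by $\gtrsim h^{2\beta+d}$ in $\pnorm{\cdot}^2$. Their KL divergence over $nm$ observations is $\lesssim nm\,h^{2\beta}h^d=O(1)$. Assouad's lemma then yields the bound $\gtrsim K h^{2\beta+d}\asymp h^{2\beta}=(nm)^{-2\beta/(2\beta+d)}$. For small sample sizes, where $h>1$, I take $K=1$ with a bounded bump; this gives a constant lower bound, which is compatible with the claim since the rate is then $\gtrsim1$.

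The main obstacle is the between-unit step. One must make sure the dependence across the $m$ measurements gives no extra information about $\mu$. The data-processing argument through the latent $\xi_i$ handles this cleanly. What remains is to check that the family's conditions hold uniformly in all three response types, and in particular that the Bernoulli and Poisson means stay inside $[c,1-c]$ and $[c,C]$ after the perturbations.
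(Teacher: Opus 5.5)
Your proposal is correct and follows essentially the same route as the paper: an Assouad argument over disjoint $h^\beta$-scaled bumps in the independent submodel $U\equiv0$ gives the $(nm)^{-2\beta/(2\beta+d)}$ term, a two-point argument gives the $n^{-1}$ term (data processing through the latent $\pm1$ intercept signs makes the information there independent of $m$), and $\max(a,b)\geq(a+b)/2$ combines the two. One small fix is needed: take $\eta=d_0n^{-1/2}$ with a fixed small $d_0$ (the paper uses $d_0\leq1/8$), because at $n=1$ your choice $\eta=1$ makes the two latent Bernoulli laws degenerate, so the bound $C\eta^2$ fails and the two hypotheses become perfectly distinguishable as $m\to\infty$.
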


The proof uses disjoint smooth bumps in an independent-response submodel for the $nm$ term. For the $n$ term, a unit has one of two fixed conditional means. Only the mixing probability changes between alternatives. This keeps the support fixed and makes the information bound independent of $m$.

\section{Unit-subsampling inference}\label{sec:inference}
\subsection{Subsample fits and the complete ensemble}
Assume balanced measurements, $m_i=m=m_n$, so that the unit records $\cO_1,\ldots,\cO_n$ are independent and identically distributed for each $n$. Their common law may vary with $n$ through $m_n$. Fix $J\geq1$ evaluation points $\mathbf x=(x_1,\ldots,x_J)$ in $\mathcal X$, with $J$ independent of $n$. The target vector is
\[
 \mu_{\mathbf x}=(\mu_0(x_1),\ldots,\mu_0(x_J))^{\mathsf T}\in\mathbb R^J,
\]
where the superscript $\mathsf T$ denotes transpose.

Let $r=r_n$ be the number of complete units used in one subsample fit, with $1\leq r\leq n$. Write $T_r(\cO_1,\ldots,\cO_r;\omega)\in\mathbb R^J$ for its vector of fitted marginal means at $\mathbf x$. The seed $\omega$ represents training randomness and is independent of the records. The training rule is symmetric in distribution: permuting its $r$ unit arguments leaves its distribution over $\omega$ unchanged. An independent uniform random permutation before training provides this symmetry for an order-dependent algorithm.

Throughout this section, $\|\cdot\|$ denotes the Euclidean norm for vectors and $\|\cdot\|_{\rm op}$ the induced operator norm for matrices. For a positive definite matrix $A$, $A^{-1/2}$ is its symmetric inverse square root. Let $I_J$ be the $J\times J$ identity matrix and $N_J(0,I_J)$ the standard $J$-dimensional normal distribution. The symbols $\Longrightarrow$ and $\to_p$ denote convergence in distribution and in probability. All limits are taken as $n\to\infty$.

Average over training randomness while holding the records fixed to obtain the symmetric kernel
\[
 h_r(o_1,\ldots,o_r)=\E_\omega\{T_r(o_1,\ldots,o_r;\omega)\}.
\]
Its population mean is $\theta_r=\E\{h_r(\cO_1,\ldots,\cO_r)\}\in\mathbb R^J$. For a possible unit record $o$, define the first-order projection kernel and its covariance by
\[
 h_{1,r}(o)=\E\{h_r(o,\cO_2,\ldots,\cO_r)\}-\theta_r,\qquad
 \Gamma_r=\Cov\{h_{1,r}(\cO_1)\}.
\]
The expectation defining $h_{1,r}(o)$ is over $r-1$ independent unit records, with $o$ fixed. For $r=1$, this means $h_{1,1}(o)=h_1(o)-\theta_1$. In particular, $\E h_{1,r}(\cO_1)=0$.

For a subset $S\subset\{1,\ldots,n\}$, write $|S|$ for its cardinality and $\cO_S$ for its collection of unit records. The complete ensemble averages over all $\binom nr$ subsets of size $r$, and its first-order projection is
\begin{equation}\label{eq:ustat}
 U_{n,r}=\binom{n}{r}^{-1}\sum_{|S|=r}h_r(\cO_S),\qquad
 L_{n,r}=\frac rn\sum_{i=1}^n h_{1,r}(\cO_i).
\end{equation}
Define
\[
 R^{(2)}_{n,r}=U_{n,r}-\theta_r-L_{n,r},\qquad
 V_n=\Cov(L_{n,r})=\frac{r^2}{n}\Gamma_r.
\]
The vector $R^{(2)}_{n,r}$ collects the terms of order two and higher in the Hoeffding decomposition, and $V_n$ is the covariance of the first-order projection. This decomposition is standard for subsampling ensembles \citep{hoeffding1948,mentchhooker2016}.

\subsection{Projection limit and base-learner conditions}
\begin{theorem}[Projection central limit theorem]\label{thm:projection}
Suppose $\Gamma_r$ is positive definite and, for some $\delta>0$,
\begin{align}
 &n^{-\delta/2}\E\norm{\Gamma_r^{-1/2}h_{1,r}(\cO_1)}^{2+\delta}\longrightarrow0,
 \label{eq:lyapunov}\\
 &\E\norm{V_n^{-1/2}R^{(2)}_{n,r}}^2\longrightarrow0,\qquad
 \norm{V_n^{-1/2}(\theta_r-\mu_{\mathbf x})}\longrightarrow0.
 \label{eq:projconditions}
\end{align}
Then
\[
 V_n^{-1/2}(U_{n,r}-\mu_{\mathbf x})\ \Longrightarrow\ N_J(0,I_J).
\]
A sufficient condition for the first requirement in \eqref{eq:projconditions} is
\begin{equation}\label{eq:projection-sufficient}
 \frac{r-1}{r(n-1)}
 \E\norm{\Gamma_r^{-1/2}(h_r-\theta_r)}^2\longrightarrow0
\end{equation}
when $2\leq r\leq n$.
\end{theorem}

Condition~\eqref{eq:lyapunov} is a Lyapunov moment condition for the normalized first-order projection. The two conditions in \eqref{eq:projconditions} control the higher-order remainder and the pointwise bias $\theta_r-\mu_{\mathbf x}$ on the same covariance scale.

For covariance estimation, we use an asymptotic linear representation of the subsample fit. In the next assumption, $\psi_n(o)\in\mathbb R^J$ denotes the contribution of a single unit record $o$ to that representation, and $R_r\in\mathbb R^J$ is the remainder, including training randomness. Both may depend on $n$ through the sampling design, the subsample size, and the training rule.
\begin{assumption}[Asymptotic linearity at the evaluation points]\label{ass:al}
For a sequence $r=r_n\to\infty$ and a fixed constant $\rho\in(0,1)$ such that $r/n\leq\rho$,
\begin{equation}\label{eq:al}
 T_r-\theta_r=\frac1r\sum_{i=1}^r\psi_n(\cO_i)+R_r,
 \qquad \E\psi_n=0,\quad \E R_r=0.
\end{equation}
The single-unit covariance matrix $\Sigma_n=\Cov\{\psi_n(\cO_1)\}\in\mathbb R^{J\times J}$ is positive definite, and
\[
 r\E\norm{\Sigma_n^{-1/2}R_r}^2\longrightarrow0,
 \qquad \frac1n\E\norm{\Sigma_n^{-1/2}\psi_n(\cO_1)}^4\longrightarrow0.
\]
\end{assumption}
For inference about $\mu_{\mathbf x}$, also assume
\begin{equation}\label{eq:biasal}
 \sqrt n\,\norm{\Sigma_n^{-1/2}(\theta_r-\mu_{\mathbf x})}\longrightarrow0.
\end{equation}
The matrix $\Sigma_n$ may change in scale with $n$. For example, increasing eigenvalues allow the standard error in some directions to decrease more slowly than $n^{-1/2}$. Assumption~\ref{ass:al} specifies the stochastic behavior of the base learner, and \eqref{eq:biasal} requires its pointwise bias to be negligible relative to the standard error. These are additional conditions for inference.

\subsection{Finite ensembles and covariance estimation}
Let $B\geq2$ be the number of subsample fits in the ensemble. In this section $B$ denotes the ensemble size; the parameter-magnitude bound $B$ in Section~\ref{sec:estimation} is used only in specifying the network class. Conditional on the observed records, draw $S_1,\ldots,S_B$ independently and uniformly from the subsets of $\{1,\ldots,n\}$ of size $r<n$. Each subset contains distinct units, and different subsets may overlap or coincide. Use independent training seeds $\omega_1,\ldots,\omega_B$, also independent of the subsets and records. For $b=1,\ldots,B$ and $i=1,\ldots,n$, set
\[
 T_b=T_r(\cO_{S_b};\omega_b),\quad
 Z_{bi}=1(i\in S_b),\quad p=r/n,\quad \overline T=B^{-1}\sum_bT_b .
\]
Here $Z_{bi}$ is the indicator that unit $i$ belongs to subset $S_b$, $p=r/n$ is its inclusion probability, and $\overline T\in\mathbb R^J$ is the finite-ensemble prediction. The scalar $p$ is distinct from the network width vector $\mathbf p$.

Define $\widehat C_i\in\mathbb R^J$ as the empirical covariance between the inclusion indicator of unit $i$ and the fitted prediction. Let $a_n$ be the finite-population correction, $\widehat V_{\rm IJ}$ the unadjusted infinitesimal jackknife (IJ) covariance estimate, and $Q_{bi}\in\mathbb R^J$ the product used in its Monte Carlo correction. Specifically,
\begin{align}
 \widehat C_i&=B^{-1}\sum_b(Z_{bi}-p)(T_b-\overline T),&
 a_n&=\frac{n-1}{n-r},\label{eq:ij}\\
 \widehat V_{\rm IJ}&=a_n^2\sum_i\widehat C_i\widehat C_i^{\mathsf T},&
 Q_{bi}&=(Z_{bi}-p)(T_b-\overline T),\notag\\
 D_B&=\frac{a_n^2}{B}\sum_i\widehat{\Cov}_b(Q_{bi}),&
 \widehat V&=[\widehat V_{\rm IJ}-D_B]_+
                  +B^{-1}\widehat{\Cov}_b(T_b).\label{eq:correctedij}
\end{align}
For vectors $A_1,\ldots,A_B\in\mathbb R^J$, sample covariance in these formulas means
\[
 \widehat{\Cov}_b(A_b)
 =\frac1{B-1}\sum_{b=1}^B(A_b-\overline A)(A_b-\overline A)^{\mathsf T},
 \qquad \overline A=\frac1B\sum_{b=1}^B A_b.
\]
In $\widehat{\Cov}_b(Q_{bi})$, the unit index $i$ is held fixed. The matrices $D_B$, $\widehat V_{\rm IJ}$, and $\widehat V$ are all $J\times J$. The correction $D_B$ adjusts for finite-$B$ variation in estimating the inclusion covariances \citep{wager2014}. For a symmetric matrix $A$, $[A]_+$ replaces negative eigenvalues by zero and retains the eigenvectors. The final term $B^{-1}\widehat{\Cov}_b(T_b)$ estimates the conditional Monte Carlo covariance of the ensemble average.

\begin{theorem}[Infinitesimal jackknife and studentization]\label{thm:ij}
Suppose Assumption~\ref{ass:al} holds and the ensemble size $B=B_n$ satisfies $B/n\to\infty$. Then
\begin{equation}\label{eq:ijlimit}
 \begin{aligned}
 \norm{n\Sigma_n^{-1/2}\widehat V_{\rm IJ}\Sigma_n^{-1/2}-I_J}_{\rm op}&\to_p0,\\
 \norm{n\Sigma_n^{-1/2}\widehat V\Sigma_n^{-1/2}-I_J}_{\rm op}&\to_p0.
 \end{aligned}
\end{equation}
If \eqref{eq:biasal} holds, then for every nonzero fixed $v\in\RR^J$,
\[
 \frac{v^{\mathsf T}(\overline T-\mu_{\mathbf x})}
      {(v^{\mathsf T}\widehat Vv)^{1/2}}
 \Longrightarrow N(0,1),\qquad
 (\overline T-\mu_{\mathbf x})^{\mathsf T}\widehat V^{-1}
       (\overline T-\mu_{\mathbf x})\Longrightarrow\chi_J^2.
\]
\end{theorem}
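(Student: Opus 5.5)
Throughout I will work in normalized coordinates. Replacing $T_b$, $\theta_r$, $\psi_n$, $R_r$ and $\mu_{\mathbf x}$ by their images under $\Sigma_n^{-1/2}$ turns every claim in \eqref{eq:ijlimit} into the same claim with $\Sigma_n=I_J$. It also leaves the studentized statistics unchanged, so I assume $\Sigma_n=I_J$.

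Write $\psi_i=\psi_n(\cO_i)$ and $\bar\psi=n^{-1}\sum_i\psi_i$. By \eqref{eq:al}, $T_b-\theta_r=r^{-1}\sum_i Z_{bi}\psi_i+R_b$, where $R_b$ is the remainder for subset $S_b$ and seed $\omega_b$. Conditional on the records, let $C_i^\infty=\Cov_{S,\omega}(Z_i,T)$ be the $B=\infty$ limit of $\widehat C_i$.

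\emph{Step 1 (linear part).} I will use $\Var(Z_i)=p(1-p)$ and $\Cov(Z_i,Z_k)=-p(1-p)/(n-1)$. The linear part then contributes
\[
 \frac{p(1-p)}{r}\frac{n}{n-1}(\psi_i-\bar\psi)=\frac{n-r}{n(n-1)}(\psi_i-\bar\psi)
\]
to $C_i^\infty$, so that $a_nC_i^\infty=n^{-1}(\psi_i-\bar\psi)+a_n\Cov(Z_i,R)$. Hence the linear part of $a_n^2\sum_iC_i^\infty C_i^{\infty\mathsf T}$ is
\[
 n^{-2}\sum_i(\psi_i-\bar\psi)(\psi_i-\bar\psi)^{\mathsf T}.
\]
I will show $n^{-1}\sum_i\psi_i\psi_i^{\mathsf T}\to_pI_J$ by Chebyshev, since its variance is at most $n^{-1}\E\norm{\psi}^4\to0$. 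I will also show $\bar\psi\bar\psi^{\mathsf T}=O_p(n^{-1})$. Together these give $n$ times the linear part $\to_p I_J$.

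\emph{Step 2 (remainder part).} Let $\mathbf c=(\Cov(Z_i,R))_{i}$. By a Bessel-type inequality,
\[
 \sum_i\norm{\Cov(Z_i,R)}^2\le\norm{\Cov(Z)}_{\rm op}\,\E_{S,\omega}\norm{R-\E_{S,\omega} R}^2,
\]
with $\norm{\Cov(Z)}_{\rm op}=p(1-p)n/(n-1)$. Multiplying by $na_n^2$ gives a factor of order $r/(1-\rho)^2$. Taking outer expectations then yields $O(r\,\E\norm{R_r}^2)=o(1)$ by Assumption~\ref{ass:al}. Cross terms between the linear and remainder parts will be handled by Cauchy--Schwarz, using Step 1. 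This proves the first line of \eqref{eq:ijlimit} for $C_i^\infty$.

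\emph{Step 3 (Monte Carlo error).} Conditionally, $\widehat C_i-C_i^\infty$ is an average of $B$ i.i.d.\ centered terms, apart from the $\overline T$ centering, which is of lower order. Therefore
\[
 \sum_i\E\norm{\widehat C_i-C_i^\infty}^2\lesssim B^{-1}\sum_i\E\bigl[(Z_i-p)^2\norm{T-\E_{S,\omega}T}^2\bigr]\lesssim B^{-1}np\,(J/r)=J/B.
\]
Multiplying by $na_n^2$ gives $O(n/B)\to0$. Combined with Step 2 through Cauchy--Schwarz, this proves the statement for $\widehat V_{\rm IJ}$.

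For $\widehat V$, the same computation gives $n\norm{D_B}=O_p(n/B)$ and $n\norm{B^{-1}\widehat\Cov_b(T_b)}=O_p(n/(rB))$, and both tend to zero. The map $A\mapsto[A]_+$ is $1$-Lipschitz in operator norm, so the second line of \eqref{eq:ijlimit} follows.

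\emph{Step 4 (studentization).} Decompose
\[
 \overline T-\mu_{\mathbf x}=\bar\psi+\bar R^{\rm ens}+(\overline T-\E_{S,\omega}T)+(\theta_r-\mu_{\mathbf x}),
\]
where $\bar R^{\rm ens}=\E_{S,\omega}R$. The average of the linear part over all subsets is exactly $\bar\psi$. The term $\bar R^{\rm ens}$ is a complete U-statistic in the seed-averaged remainder. Hoeffding's variance inequality gives $\E\norm{\bar R^{\rm ens}}^2\le(r/n)\E\norm{R_r}^2=o(n^{-1})$. The Monte Carlo term has second moment $O(1/(rB))=o(n^{-1})$, and the bias term is $o(n^{-1/2})$ by \eqref{eq:biasal}. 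Lyapunov's CLT, with the fourth-moment condition, gives $\sqrt n\,\bar\psi\Longrightarrow N_J(0,I_J)$. Since $n\widehat V\to_pI_J$, Slutsky's lemma yields both the univariate and the $\chi^2_J$ limits.

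I expect the main obstacle to be Step 2 together with its interaction with Step 3. The remainder $R_r$ may depend on the seed and on the whole subset in a nonlinear way. The Bessel bound must therefore be applied conditionally, and the $a_n^2p(1-p)$ scaling must be tracked carefully so that only the condition $r\E\norm{R_r}^2\to0$ is needed. The centering at $\overline T$ rather than $\E_{S,\omega}T$ in $\widehat C_i$ and $Q_{bi}$ also needs a short separate bound. I would check it first via the identity $\sum_b(Z_{bi}-p)=B(\bar Z_i-p)$, with $\bar Z_i=B^{-1}\sum_bZ_{bi}$, and $\E(\bar Z_i-p)^2=p(1-p)/B$.
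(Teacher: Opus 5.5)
Your outline follows the paper's proof closely. It uses the same normalization, the exact identity $a_nC_i^{\rm lin}=(\psi_i-\overline\psi)/n$, the Bessel bound with $\norm{\Cov(Z)}_{\rm op}=r(n-r)/\{n(n-1)\}$, the $O(B^{-1})$ Monte Carlo error for $\widehat C$, trace bounds for $D_B$ and $B^{-1}\widehat{\Cov}_b(T_b)$, and Hoeffding contraction plus Lyapunov for the CLT.

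\textbf{The gap: the positive part in $\widehat V$.} The step for $\widehat V$ does not work as written, for three reasons.
\begin{enumerate}
\item Your reduction to $\Sigma_n=I_J$ is not valid for $\widehat V$. The positive part is taken in the original coordinates, and $\Sigma_n^{-1/2}[A]_+\Sigma_n^{-1/2}\neq[\Sigma_n^{-1/2}A\Sigma_n^{-1/2}]_+$ for indefinite $A$.
\item The map $A\mapsto[A]_+$ is not $1$-Lipschitz in operator norm. Take $A=\mathrm{diag}(1,-0.1)$, $X=2^{-1/2}\bigl(\begin{smallmatrix}1&1\\1&-1\end{smallmatrix}\bigr)$ and $t=10^{-3}$. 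Then $\norm{[A+tX]_+-[A]_+}_{\rm op}\approx1.09\,t$, while $\norm{tX}_{\rm op}=t$.
\item Even a Lipschitz bound with some constant in the original coordinates would not transfer to the $\Sigma_n$-normalized scale. It would cost the condition number of $\Sigma_n$, which the setting allows to diverge.
\end{enumerate}
In normalized coordinates the comparison you actually need, $\norm{[X]_+-I_J}_{\rm op}\leq\norm{X-I_J}_{\rm op}$, is true because $X$ commutes with $I_J$. However, that $X$ is not the matrix whose positive part defines $\widehat V$.

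\textbf{The fix, as in the paper: the positive part is eventually inactive.}
\begin{enumerate}
\item $\widehat V_{\rm IJ}$ and $D_B$ do transform by congruence, so your Steps 1--3 give $n\Sigma_n^{-1/2}(\widehat V_{\rm IJ}-D_B)\Sigma_n^{-1/2}\to_pI_J$.
\item Hence this matrix is positive definite with probability tending to one. Since positive definiteness is preserved under invertible congruence, so is $\widehat V_{\rm IJ}-D_B$ itself.
\item On that event $[\widehat V_{\rm IJ}-D_B]_+=\widehat V_{\rm IJ}-D_B$.
\item Adding the normalized term $nB^{-1}\widehat{\Cov}_b(T_b)=o_p(1)$ gives the second line of \eqref{eq:ijlimit}.
\end{enumerate}

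\textbf{Two smaller points.}

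First, the $\overline T$-centering. The identity $\E(\overline Z_i-p)^2=p(1-p)/B$ alone does not bound $\E\bigl[\sum_i(\overline Z_i-p)^2\norm{\overline T-\E_{S,\omega}T}^2\bigr]$, because both factors are built from the same $B$ draws. Use instead the deterministic bound $\sum_i(\overline Z_i-p)^2\leq r(1-p)$. This follows from Jensen, since $\sum_i(Z_{bi}-p)^2=r(1-p)$ for every subset. Combine it with the fact that, conditionally on the records, $\norm{\overline T-\E_{S,\omega}T}^2$ has mean $B^{-1}\E_{S,\omega}\norm{T-\E_{S,\omega}T}^2$. This gives the $O(B^{-1})$ bound.

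Second, the contrast. After normalization, a fixed contrast $v$ becomes the $n$-dependent direction $\Sigma_n^{1/2}v$. The univariate limit therefore needs either the paper's scalar Lyapunov argument along that direction or a subsequence argument on the unit sphere. Both are routine because $J$ is fixed.
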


Here $\chi_J^2$ denotes the chi-squared distribution with $J$ degrees of freedom. The vector $v$ specifies a fixed linear contrast of the $J$ marginal means. Covariance consistency in \eqref{eq:ijlimit} implies that $\widehat V$ is invertible with probability tending to one.

This sufficient regime makes Monte Carlo variation asymptotically negligible relative to the sampling covariance $\Sigma_n/n$. For any finite $B$, the exact covariance decomposition is
\begin{equation}\label{eq:totalvariance}
 \Cov(\overline T)=\Cov(U_{n,r})+
       B^{-1}\E\{\Cov_*(T_b)\},
\end{equation}
Here $\Cov_*(T_b)$ is the conditional covariance given $\cO_1,\ldots,\cO_n$, averaging over subset selection and training randomness, and the outer expectation averages over the records. Conditional on the records, $\E_*(T_b)=U_{n,r}$, so \eqref{eq:totalvariance} follows from the law of total covariance. Theorem~\ref{thm:ij} uses a regime in which the Monte Carlo contribution is asymptotically negligible. A regime in which that contribution is of leading order also requires a conditional limit theorem.

The pointwise bias condition \eqref{eq:biasal} concerns accuracy at the fixed evaluation points. To illustrate the distinction from integrated error, fix an interior point $x_0\in(0,1)^d$ and let $h>0$ tend to zero. Define
\[
 v_h(x)=\left(1-\norm{x-x_0}_\infty/h\right)_+
\]
where $(t)_+=\max\{t,0\}$. Then $v_h(x_0)=1$, whereas
\[
 \int_{[0,1]^d}v_h(x)^2\,dx\leq(2h)^d\longrightarrow0,
\]
with $dx$ denoting Lebesgue measure. This example explains why pointwise bias control is specified separately from the integrated risk bounds.

\section{Numerical study}\label{sec:simulation}
\subsection{Simulation design}
We generated $X_{ij}\sim{\rm Unif}([0,1]^3)$ independently and used
\[
 g(x)=-0.55+1.05\sin(\pi x_1x_2)+0.65(x_3-0.5)^2 .
\]
For binary outcomes, $\mu_0(x)=\op{expit}\{g(x)\}$; for counts,
$\mu_0(x)=\exp\{g(x)+0.35\}$. Let
\[
 a(x)=\frac{0.55+0.45\cos(\pi x_1)}
 {\{0.55^2+0.45^2/2\}^{1/2}},\qquad
 Z_i\sim {\rm Unif}(-\sqrt3,\sqrt3),\qquad
 M_i(x)=\mu_0(x)+\tau Z_i a(x).
\]
The normalization gives
$\E\int\{M_i(x)-\mu_0(x)\}^2dx=\tau^2$. We used $\tau=0$ and
$0.075$ for binary responses, and $\tau=0$ and $0.22$ for counts.
Conditional responses were Bernoulli or Poisson. The chosen values keep all
conditional means in their required ranges without clipping.

We used $n\in\{100,250,500\}$ and $m\in\{2,8,32\}$, with 50 replications per
configuration. The generalized network had two ReLU layers of widths 36 and 24
and was trained by Adam with early stopping. The main comparisons use an additive cubic-spline generalized linear model
and, for counts, the same network fitted by squared loss. The spline fit
provides an additive reference, while the squared-loss network allows us to
examine the choice of training loss within the same architecture.
Integrated squared error was evaluated on 5,000 new design
points. All tuning settings were fixed across replications.

We used two additional designs. To expose the between-unit floor, we generated
a one-dimensional Gaussian response with marginal mean
$\sin(2\pi x)+0.5\cos(4\pi x)$, $n=100$,
$m\in\{2,8,32,128\}$, and either no unit effect or a standard normal random
intercept. In the latter case $\tau^2/n=0.01$. For unequal cluster sizes,
$n=300$ and $m_i$ took values $2,4,8,16,32$ with probabilities
$0.28,0.24,0.20,0.16,0.12$. We compared equal-observation and equal-unit
training weights under $\tau=0$ and under count heterogeneity $\tau=0.28$.

A fourth design used an eight-dimensional nonadditive count regression. We set
$n=500$ and $m=16$, with covariates drawn uniformly from $[-1,1]^8$. The
marginal mean was
\[
\begin{split}
\mu_0(x)=\exp\{&-0.60+0.65\sin(\pi x_1x_2)
+0.50\cos\{\pi(x_3+x_4)/2\}\\
&+0.40\sin(\pi x_5x_6x_7)\}.
\end{split}
\]
The conditional Poisson mean was
$M_i(x)=\mu_0(x)(1+0.25Z_i)$ with $Z_i\sim{\rm Unif}(-1,1)$.
This design examines a regression surface containing pairwise and higher-order
interactions that are absent from the additive spline model.

\subsection{Simulation results}
Figures~\ref{fig:binary-risk} and \ref{fig:count-risk} show the balanced-design
results for the methods compared in the main text. Increasing $n$ or $m$ reduced error for the generalized network
throughout the grid. The effect of moderate unit heterogeneity is small at
these sample sizes because measurement error, approximation, and numerical
training error still contribute appreciably. At $n=250$ and $m=8$, the generalized network has lower mean error than
the additive spline fit for both response types. The two networks have
similar count errors; Table~\ref{tab:simulation} gives the numerical comparison.

\begin{figure}[p]
\centering
\includegraphics[width=\textwidth]{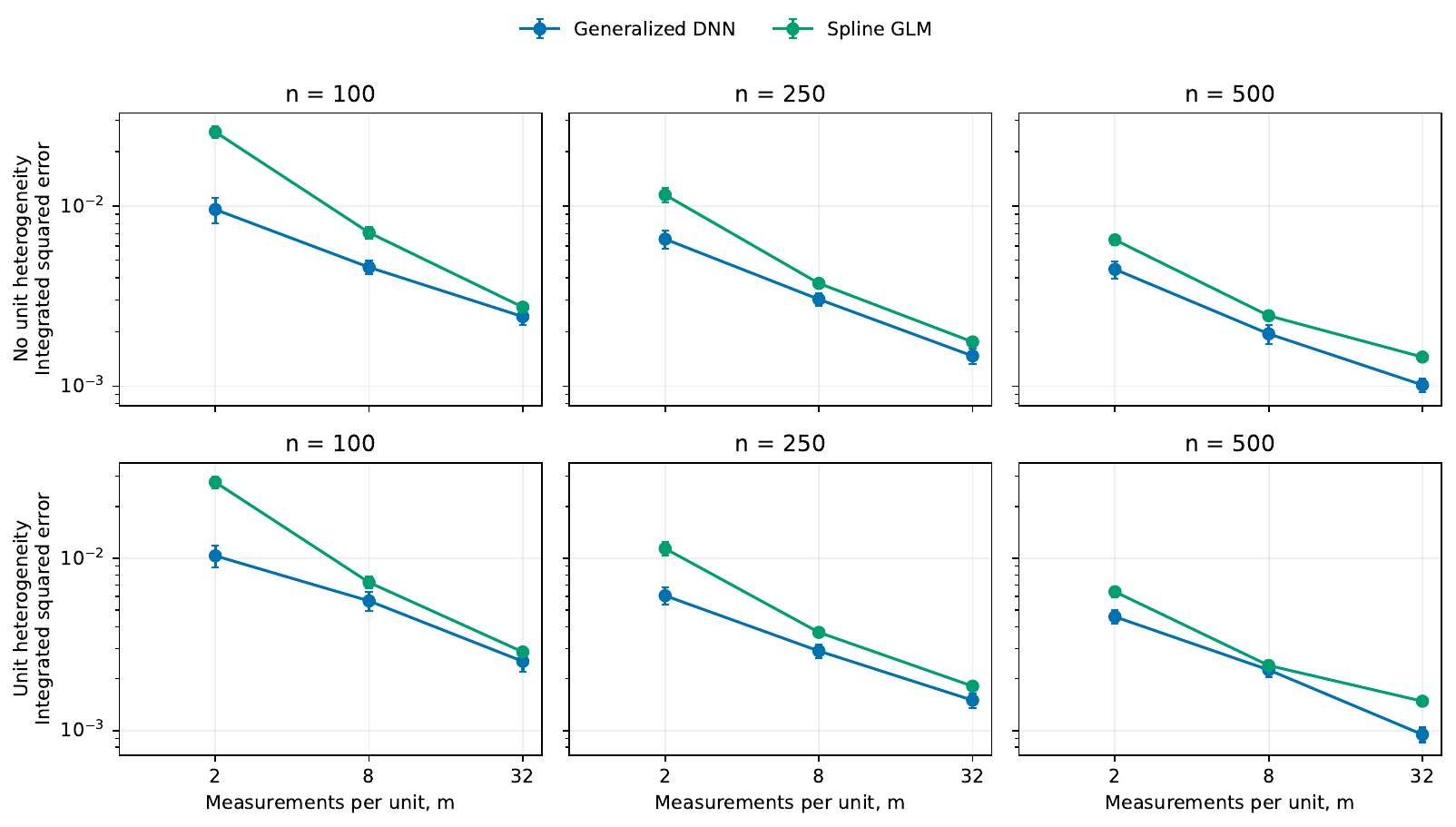}
\caption{Integrated squared error for binary responses. The upper row has
independent measurements and the lower row has unit heterogeneity
$\tau=0.075$. Error bars are 1.96 Monte Carlo standard errors over 50
replications.}\label{fig:binary-risk}
\end{figure}

\begin{figure}[p]
\centering
\includegraphics[width=\textwidth]{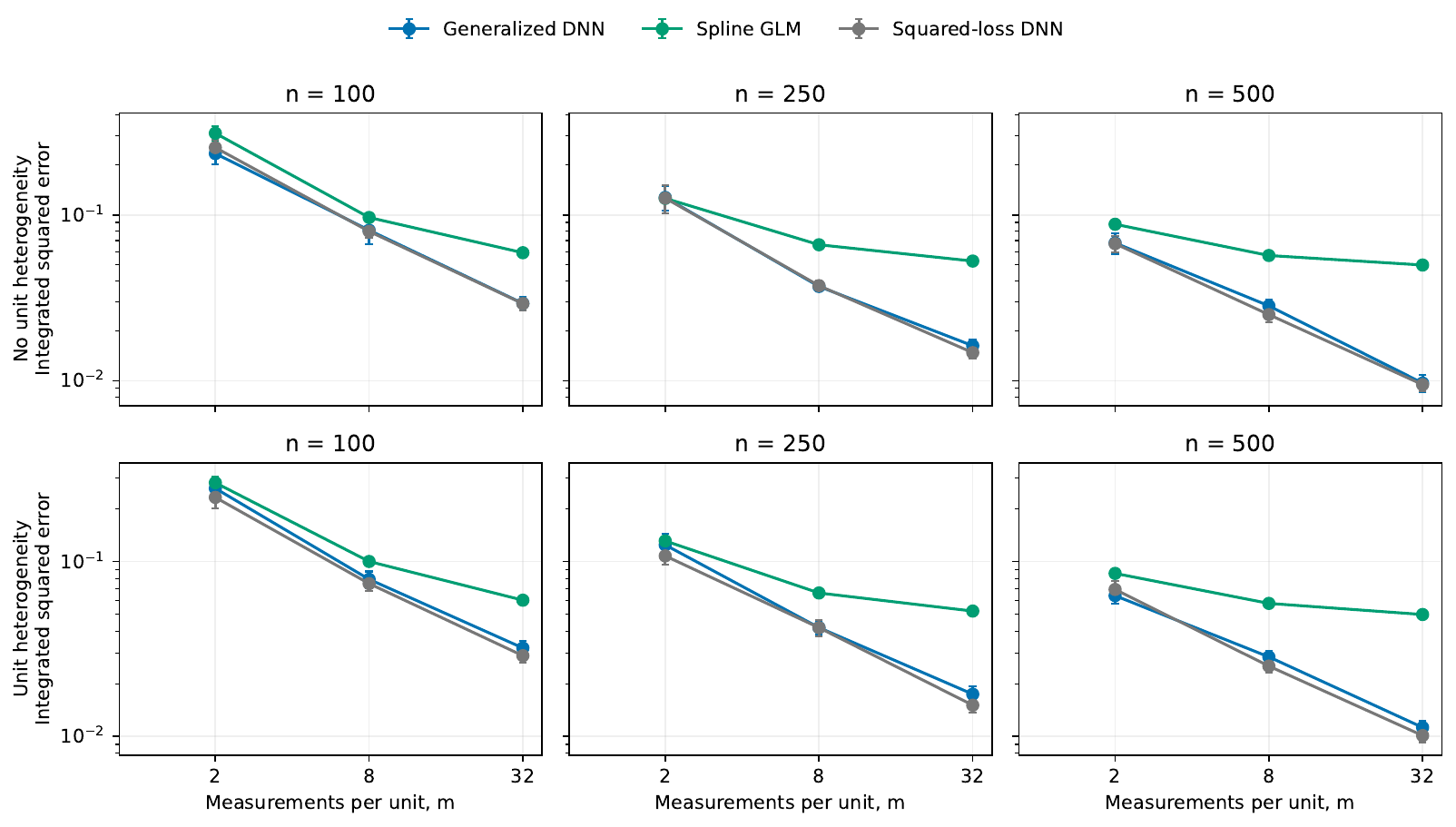}
\caption{Integrated squared error for count responses. The upper row has
independent measurements and the lower row has unit heterogeneity
$\tau=0.22$. Error bars are 1.96 Monte Carlo standard errors over 50
replications.}\label{fig:count-risk}
\end{figure}

\begin{table}[t]\centering
\caption{Integrated squared error at $n=250$, $m=8$, with moderate unit
heterogeneity. Parentheses contain Monte Carlo standard errors. Boldface
indicates the smallest displayed IMSE in each column.}
\label{tab:simulation}
\begin{tabular}{lcc}
\toprule
Method & Binary IMSE & Count IMSE \\
\midrule
Generalized DNN & \textbf{0.0029 (0.0001)} & \textbf{0.0419 (0.0019)} \\
Spline GLM & 0.0037 (0.0001) & 0.0661 (0.0008) \\
Squared-loss DNN & -- & \textbf{0.0419 (0.0023)} \\
\bottomrule
\end{tabular}
\end{table}

In the nonadditive count design, the generalized network has the smallest
mean error among the methods in the main comparison, followed by the squared-loss network;
Table~\ref{tab:nonadditive} gives the results. The difference between the two
networks is modest, while both improve on the additive spline fit. This design
supplements the three-dimensional experiment by placing nonlinear interactions
in the regression surface.

\begin{table}[t]\centering
\caption{Integrated squared error in the eight-dimensional nonadditive count
design. Monte Carlo standard errors are based on 50 replications. Boldface
indicates the smallest mean IMSE.}
\label{tab:nonadditive}
\begin{tabular}{lcc}
\toprule
Method & Mean IMSE & Monte Carlo SE \\
\midrule
Generalized DNN & \textbf{0.0476} & 0.0009 \\
Spline GLM & 0.1588 & 0.0004 \\
Squared-loss DNN & 0.0496 & 0.0007 \\
\bottomrule
\end{tabular}
\end{table}

The Gaussian experiment in Figure~\ref{fig:floor} isolates the two sampling
scales. Without a unit effect, the mean error falls from $0.116$ at $m=2$ to
$0.004$ at $m=128$. With a standard normal random intercept, it falls from
$0.210$ to $0.016$; most of the improvement occurs by $m=32$, after which the
error remains close to the reference value $\tau^2/n=0.01$. The remaining
difference includes finite-sample fitting error.

\begin{figure}[t]\centering
\includegraphics[width=0.66\textwidth]{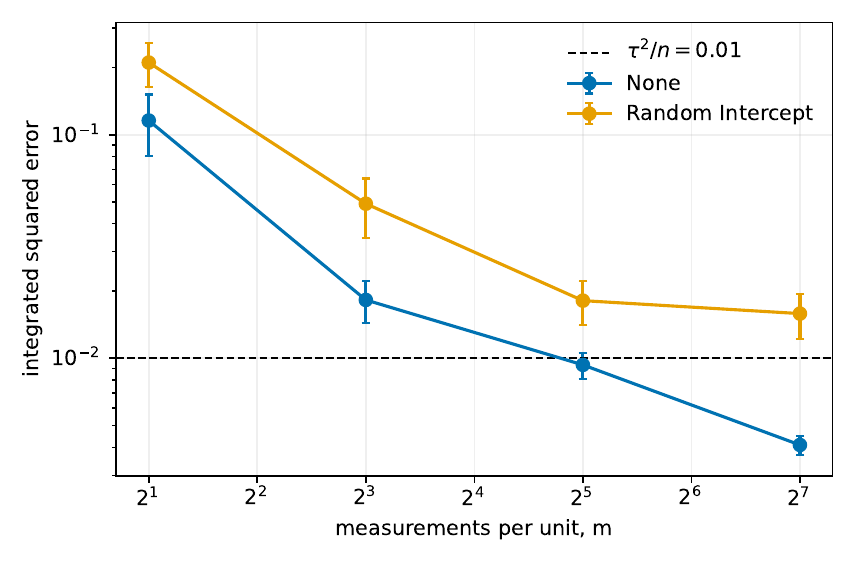}
\caption{Gaussian experiment separating measurement information from the
random-intercept floor. Error bars are 1.96 Monte Carlo standard errors over
50 replications.}\label{fig:floor}
\end{figure}

The unequal-size design averaged 2,877 observations. Observation weighting
assigned them to about 149 effective units, compared with 300 under equal-unit
weighting. Nevertheless, observation weighting has lower error in both
settings in Table~\ref{tab:unequal}; for this design, its smaller measurement
component and more stable numerical fit outweigh the larger between-unit
component. The oracle inequality does not impose a universal ordering between
the two choices.

\begin{table}[t]\centering
\caption{Integrated squared error with unequal cluster sizes. Boldface
indicates the smallest mean IMSE within each heterogeneity setting.}
\label{tab:unequal}
\begin{tabular}{llcc}
\toprule
Heterogeneity & Weighting & Mean IMSE & Monte Carlo SE \\
\midrule
None & Equal Observation & \textbf{0.0295} & 0.0011 \\
None & Equal Unit & 0.0502 & 0.0026 \\
Strong & Equal Observation & \textbf{0.0347} & 0.0015 \\
Strong & Equal Unit & 0.0596 & 0.0032 \\
\bottomrule
\end{tabular}
\end{table}

\subsection{Capital Bikeshare counts}
In this section, we apply the proposed model in a real data application. The Capital Bikeshare data contain 17,379 hourly rental counts from 731 days
in 2011 and 2012, together with weather and calendar variables
\citep{fanaeegama2014,ucibike}. We treated a day as an independent unit and
its hourly counts as repeated measurements. Predictors were hour, month, year,
day of week, season, holiday and working-day indicators, weather category,
temperature, humidity, and wind speed. Sine and cosine terms represented hour
and month. Five-fold cross-validation kept every hour of a day in the same
fold.

Table~\ref{tab:bike} reports Poisson deviance and root mean squared error for
the generalized network, the squared-loss network, and the additive spline
GLM. Among these methods, the generalized network has the smallest Poisson
deviance. The squared-loss network has a slightly smaller
RMSE, consistent with the difference between the two training criteria.
Figure~\ref{fig:bike} compares the observed and cross-validated mean hourly
profiles. The hourly grid and possible dependence across days differ from the sampling
assumptions in Section~\ref{sec:model}. This application illustrates predictive
performance under day-level cross-validation.

\begin{table}[t]\centering
\caption{Five-fold day-level cross-validation for hourly bicycle rentals.
Boldface indicates the smallest value in each column.}
\label{tab:bike}
\begin{tabular}{lcc}
\toprule
Method & Poisson deviance & RMSE \\
\midrule
Generalized DNN & \textbf{10.07} & 48.29 \\
Spline GLM & 53.16 & 104.70 \\
Squared-loss DNN & 20.40 & \textbf{46.67} \\
\bottomrule
\end{tabular}
\end{table}

\begin{figure}[t]\centering
\includegraphics[width=0.76\textwidth]{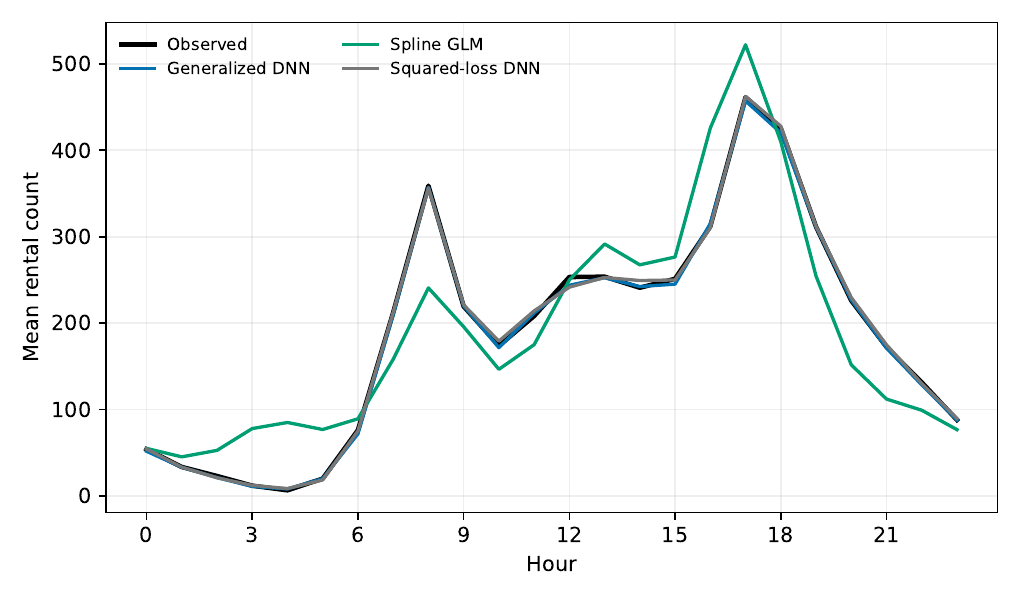}
\caption{Observed and cross-validated mean rental count by hour.}
\label{fig:bike}
\end{figure}

\clearpage

\bibliographystyle{apalike}
\bibliography{references}

\section*{Appendix}
\section{Identification and score variance}
\subsection{Conditional moment condition for the response families}
We verify the examples used in the paper. A centered Bernoulli variable has absolute value at most one, so \eqref{eq:bernstein} holds with $K_\epsilon=1$ and $\sigma^2=2$. For a Poisson variable of conditional mean $0\leq\lambda\leq C$, let $\epsilon=Y-\lambda$. Since $e^{|z|}\leq e^z+e^{-z}$,
\[
 \E(e^{|\epsilon|}\mid\lambda)
 \leq\exp\{\lambda(e-2)\}+\exp(\lambda/e)
 \leq A_C:=\exp\{C(e-2)\}+\exp(C/e).
\]
The inequality $|z|^k\leq k!e^{|z|}$ gives \eqref{eq:bernstein} with $K_\epsilon=1$ and $\sigma^2=2A_C$, uniformly in the conditional mean. For a centered Gaussian error of fixed variance $v$, the same argument gives $\E e^{|\epsilon|}\leq2e^{v/2}$, so one may take $K_\epsilon=1$ and $\sigma^2=4e^{v/2}$. These choices establish a common moment bound; their constants are not optimized.

\subsection{Proof of Proposition~\ref{prop:curvature}}
Conditional expectation in \eqref{eq:model} gives
\[
 \E(Y\mid X=x)=\E\{M(x)\}=\mu_0(x),
\]
because the design is independent of $M$. If $d(x)=f(x)-f_0(x)$, the integral remainder formula for Taylor expansion gives
\begin{align*}
 b\{f(x)\}-b\{f_0(x)\}-b'\{f_0(x)\}d(x)
 &=d(x)^2\int_0^1(1-t)b''\{f_0(x)+td(x)\}\,dt.
\end{align*}
The segment between $f_0(x)$ and $f(x)$ is in $[-F,F]$. Multiplying the curvature bounds by $1-t$ and integrating yields
\[
 \frac{\kappa_-}{2}d(x)^2
 \leq b\{f(x)\}-b\{f_0(x)\}-\mu_0(x)d(x)
 \leq\frac{\kappa_+}{2}d(x)^2.
\]
All terms are integrable: the functions are bounded and \eqref{eq:bernstein} bounds the conditional second moment of the error. Integrating over $P_X$ proves the first part of \eqref{eq:curvature}. Also,
\[
 |b'\{f(x)\}-b'\{f_0(x)\}|
 =\left|d(x)\int_0^1 b''\{f_0(x)+td(x)\}\,dt\right|
 \leq\kappa_+|d(x)|.
\]
Squaring and integrating proves the second part. Equality of population risks at their minimum implies $\pnorm{f-f_0}=0$ by the lower curvature bound. This proves the uniqueness statement. \qed

\subsection{Proof of Proposition~\ref{prop:score}}
Set $W_{ij}=\{Y_{ij}-\mu_0(X_{ij})\}w(X_{ij})$. The conditional mean of a single term is
\[
 \E(W_{ij}\mid M_i)
 =\int\{M_i(x)-\mu_0(x)\}w(x)\,dP_X(x)=\ip{U_i}{w}.
\]
Since $U_i$ is uniformly bounded and $\E(\epsilon_{ij}^2\mid X_{ij},M_i)\leq\sigma^2$,
\[
 \E W_{ij}^2
 =\E\{(U_i(X_{ij})^2+\epsilon_{ij}^2)w(X_{ij})^2\}
 \leq(C_U^2+\sigma^2)\pnorm{w}^2<\infty.
\]
The cross term vanishes by conditional centering. Conditional on $M_i$, the $W_{ij}$ are independent and identically distributed. The law of total variance therefore yields
\begin{align*}
 \Var\left(\frac1{m_i}\sum_jW_{ij}\right)
 &=\Var\{\ip{U_i}{w}\}
   +\E\left\{\frac1{m_i^2}\sum_j\Var(W_{ij}\mid M_i)\right\}\\
 &=A_w+\frac{D_w}{m_i}.
\end{align*}
The unit averages are independent across $i$. Multiplication by $q_i^2$ and summation proves \eqref{eq:score}. \qed

\section{Conditional empirical process and oracle inequality}\label{app:oracle}
We prove a finite-net bound explicitly. This avoids a localization step whose remainder would otherwise have to be checked separately.

Fix deterministic $g\in\cF$, let $d_f=f-g$, and write $\mathcal M=\sigma(M_1,\ldots,M_n)$. For one observation define
\[
 G_{ij}(f,g)=b\{f(X_{ij})\}-b\{g(X_{ij})\}-Y_{ij}d_f(X_{ij}).
\]
Conditional expectation given $\mathcal M$ will be written $\E_{\mathcal M}$. Put
\begin{equation}\label{eq:Z}
 Z_q(f,g)=\sum_{i,j}\frac{q_i}{m_i}
 \{G_{ij}(f,g)-\E_{\mathcal M}G_{ij}(f,g)\}.
\end{equation}

\begin{lemma}[Conditional moment bound]\label{lem:moments}
There are constants $v_0,c_0<\infty$, depending only on Assumption~\ref{ass:model}, such that for every deterministic $f,g\in\cF$, $k\geq2$, and every $(i,j)$,
\begin{equation}\label{eq:momentlemma}
 \E_{\mathcal M}|G_{ij}(f,g)-\E_{\mathcal M}G_{ij}(f,g)|^k
 \leq\frac{k!}{2}v_0\pnorm{f-g}^2c_0^{\,k-2}.
\end{equation}
\end{lemma}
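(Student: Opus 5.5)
Given $\mathcal M$, the pair $(X_{ij},Y_{ij})$ has a fixed conditional law: $X_{ij}\sim P_X$ independent of $M_i$, and $Y_{ij}=M_i(X_{ij})+\epsilon_{ij}$. So I would decompose
\[
 G_{ij}(f,g)=\underbrace{b\{f(X_{ij})\}-b\{g(X_{ij})\}-M_i(X_{ij})d_f(X_{ij})}_{=:A_{ij}}-\epsilon_{ij}d_f(X_{ij}),
\]
bound the $k$th conditional moment of each piece, and recombine. The reduction to uncentered moments uses $\E_{\mathcal M}|Z-\E_{\mathcal M}Z|^k\le 2^{k-1}(\E_{\mathcal M}|Z|^k+|\E_{\mathcal M}Z|^k)\le 2^k\E_{\mathcal M}|Z|^k$ (Jensen). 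The factor $2^k$ is absorbed by enlarging $c_0$ and $v_0$. The same device handles $|A+B|^k\le 2^{k-1}(|A|^k+|B|^k)$.

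\emph{Bounded part.} Since $f,g$ take values in $[-F,F]$ and $b$ is $C^1$ there, the mean value theorem gives
\[
 |b(f)-b(g)|\le L_b|d_f|,\qquad L_b=\sup_{|t|\le F}|b'(t)|.
\]
Also $|M_i(x)|\le|\mu_0(x)|+C_U\le \sup_{|t|\le F}|b'(t)|+C_U$ almost surely. Hence $|A_{ij}|\le K_1|d_f(X_{ij})|$ with $K_1=2L_b+C_U$. Because $|d_f|\le 2F$,
\[
 \E_{\mathcal M}|A_{ij}|^k\le K_1^k(2F)^{k-2}\E|d_f(X)|^2=K_1^2(2FK_1)^{k-2}\pnorm{f-g}^2 .
\]
Here I use that $X_{ij}$ is independent of $\mathcal M$ with law $P_X$, and that $f,g$ are deterministic. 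The bound $K_1^2(2FK_1)^{k-2}$ is at most $\frac{k!}{2}v\,c^{k-2}$ for suitable $v,c$.

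\emph{Noise part.} Conditioning further on $X_{ij}$ and applying \eqref{eq:bernstein} gives
\[
 \E_{\mathcal M}\{|\epsilon_{ij}|^k|d_f(X_{ij})|^k\}\le \frac{k!}{2}\sigma^2K_\epsilon^{k-2}\E|d_f(X)|^k\le\frac{k!}{2}\sigma^2(2FK_\epsilon)^{k-2}\pnorm{f-g}^2 .
\]
This uses the tower property over $\sigma(\mathcal M,X_{ij})$. The key point is that the conditional law of $\epsilon_{ij}$ given $(X_{ij},M_i)$ is the same as given $(X_{ij},\mathcal M)$, by independence across units.

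\emph{Combination.} Adding the two pieces with the $2^{k-1}$ and $2^k$ factors yields \eqref{eq:momentlemma}, with $c_0=4F\max(K_1,K_\epsilon)$ and $v_0$ a multiple of $K_1^2+\sigma^2$. The main point needing care is the conditioning: one must check that given $\mathcal M$ the covariate $X_{ij}$ still has law $P_X$, so that $\E_{\mathcal M}|d_f(X_{ij})|^2=\pnorm{f-g}^2$ rather than some $M_i$-dependent quantity. This follows from the stated independence of covariates and random functions. The rest is bookkeeping of constants that depend only on $F$, $C_U$, $\sigma$, $K_\epsilon$ and $b$.
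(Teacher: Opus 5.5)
Your proposal is correct and follows essentially the paper's proof. It uses the same split of $G_{ij}$ into the bounded part $b\{f(X_{ij})\}-b\{g(X_{ij})\}-M_i(X_{ij})d_f(X_{ij})$, controlled by the mean value theorem with constant $2L_b+C_U$ and $|d_f|\le 2F$, and the noise part $\epsilon_{ij}d_f(X_{ij})$, controlled by \eqref{eq:bernstein} after conditioning on $X_{ij}$. The pieces are recombined in the same way, with $|a+b|^k\le 2^{k-1}(|a|^k+|b|^k)$. The only difference is that you center at the end at a cost of $2^k$. The paper instead centers only the bounded part, since the noise term already has conditional mean zero.

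There is one bookkeeping slip. Your chain gives a total factor $2^{2k-1}=8\cdot 4^{k-2}$, so the constants that actually follow are $c_0=8F\max(K_1,K_\epsilon)$ and $v_0=8(K_1^2+\sigma^2)$, not $c_0=4F\max(K_1,K_\epsilon)$. This does not affect the existence claim.
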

\begin{proof}
Let $L_b=\sup_{|t|\leq F}|b'(t)|$ and $M_{\max}=L_b+C_U$, so that
$\norm{M_i}_\infty\leq M_{\max}$. For fixed $M_i$, put
\[
 a_i(x)=b\{f(x)\}-b\{g(x)\}-M_i(x)d_f(x),\qquad D=L_b+M_{\max}.
\]
The mean value theorem gives $|a_i(x)|\leq D|d_f(x)|\leq2FD$. Decompose the centered loss difference as
\[
 G_{ij}-\E_{\mathcal M}G_{ij}
 =A_{ij}-E_{ij},\quad
 A_{ij}=a_i(X_{ij})-\int a_i\,dP_X,\quad
 E_{ij}=\epsilon_{ij}d_f(X_{ij}).
\]
The error term has conditional mean zero. Furthermore,
\[
 |A_{ij}|\leq4FD,\qquad
 \E_{\mathcal M}A_{ij}^2\leq D^2\pnorm{d_f}^2,
\]
and hence
\[
 \E_{\mathcal M}|A_{ij}|^k
 \leq D^2(4FD)^{k-2}\pnorm{d_f}^2.
\]
For the error term, conditioning additionally on $X_{ij}$ and applying
\eqref{eq:bernstein} gives
\[
 \E_{\mathcal M}|E_{ij}|^k
 \leq\frac{k!}{2}\sigma^2K_\epsilon^{k-2}
       \int|d_f(x)|^k\,dP_X(x)
 \leq\frac{k!}{2}\sigma^2(2FK_\epsilon)^{k-2}\pnorm{d_f}^2.
\]
Finally use $|a-b|^k\leq2^{k-1}(|a|^k+|b|^k)$. For example, increasing
$v_0=8(D^2+\sigma^2+1)$ and
$c_0=\max(8FD,4FK_\epsilon,1)$ if necessary gives
\eqref{eq:momentlemma} for every $k\geq2$. No independence between $A_{ij}$ and
$E_{ij}$ is needed.
\end{proof}

\begin{lemma}[Finite-net bound]\label{lem:net}
Under the conditions of Theorem~\ref{thm:oracle}, for any fixed $\eta>0$ and
$g\in\cF$,
\begin{equation}\label{eq:offset}
 \E\sup_{f\in\cF}
 \bigl[|Z_q(f,g)|-\eta\pnorm{f-g}^2\bigr]_+
 \leq C_\eta(V_q+w_q)\{H(\varepsilon)+1\}+C_\eta\varepsilon .
\end{equation}
\end{lemma}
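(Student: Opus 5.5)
We prove the bound conditionally on $\mathcal M$ and then take outer expectations. Given $\mathcal M$, the summands $q_i m_i^{-1}\{G_{ij}(f,g)-\E_{\mathcal M}G_{ij}(f,g)\}$ are independent and centered. By Lemma~\ref{lem:moments} they satisfy a Bernstein moment condition. For a fixed deterministic $f$, the variance proxy is $\sum_{i,j}(q_i/m_i)^2 v_0\pnorm{f-g}^2=v_0V_q\pnorm{f-g}^2$ and the scale parameter is $c_0w_q$. The conditional Bernstein inequality therefore gives, for every $u>0$,
\[
 \PP_{\mathcal M}\bigl(|Z_q(f,g)|\geq \sqrt{2v_0V_q\pnorm{f-g}^2u}+c_0w_qu\bigr)\leq 2e^{-u}.
\]
The constants here are free of $\mathcal M$, so the same tail bound holds unconditionally.

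Next we discretize. Let $f_1,\ldots,f_K$, $K=N(\varepsilon,\cF,\norm{\cdot}_\infty)$, be an $\varepsilon$-net in $\cF$. For $\norm{f-f_k}_\infty\leq\varepsilon$, write $Z_q(f,g)-Z_q(f_k,g)=Z_q(f,f_k)$. Since $b$ is Lipschitz on $[-F,F]$,
\[
 |G_{ij}(f,f_k)|\leq(L_b+|Y_{ij}|)\varepsilon .
\]
Hence $\E|Z_q(f,f_k)|\leq 2\varepsilon\sum_{i,j}(q_i/m_i)\E(L_b+|Y_{ij}|)\leq C\varepsilon$ uniformly in $f$, because $\sum_iq_i=1$ and $\E|Y_{ij}|$ is bounded. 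Uniformity in $f$ holds because the bound is a supremum-free majorant. Also $\pnorm{f-g}^2\geq\tfrac12\pnorm{f_k-g}^2-\varepsilon^2$. The sup over $\cF$ is thus bounded by the max over the net with $\eta$ replaced by $\eta/2$, plus $C\varepsilon$ (using $\varepsilon\le1$). Separability makes the sup measurable.

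For the max over the net, we use an offset argument. By Young's inequality,
\[
 \sqrt{2v_0V_q\pnorm{f_k-g}^2u}\leq \tfrac{\eta}{2}\pnorm{f_k-g}^2+\frac{v_0V_qu}{\eta}.
\]
Hence $\bigl[|Z_q(f_k,g)|-\tfrac\eta2\pnorm{f_k-g}^2\bigr]_+$ exceeds $(v_0/\eta+c_0)(V_q+w_q)u$ with probability at most $2e^{-u}$. A union bound over the $K$ points, with $u=H(\varepsilon)+t$, followed by integration of the tail in $t$, gives
\[
 \E\max_k\bigl[\cdots\bigr]_+\leq C_\eta(V_q+w_q)\{H(\varepsilon)+\log2+1\}.
\]
Combining this with the discretization step yields \eqref{eq:offset}.

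The main obstacle is the discretization step. The quadratic offset must transfer cleanly between $f$ and its net point, and the term $Z_q(f,f_k)$ involves unbounded $Y_{ij}$. We handle the latter through the crude first-moment bound rather than a chaining argument. This costs only $C\varepsilon$, which is exactly the $\varepsilon$ term allowed on the right-hand side.
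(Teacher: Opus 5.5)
Your proposal is correct and follows the paper's proof essentially step for step. It uses the conditional Bernstein tail from Lemma~\ref{lem:moments} with variance proxy $v_0V_q\pnorm{f-g}^2$ and scale $c_0w_q$, Young's inequality to absorb the square-root term into the offset, and a union bound over an $\varepsilon$-net followed by tail integration. It controls the discretization error by the $f$-free majorant $\varepsilon\sum_{i,j}(q_i/m_i)\{2L_b+|Y_{ij}|+\E_{\mathcal M}|Y_{ij}|\}$.

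The only cosmetic difference is how the offset is transferred to the net point. You use $\pnorm{f-g}^2\geq\frac12\pnorm{f_k-g}^2-\varepsilon^2$, which replaces $\eta$ by $\eta/2$, whereas the paper uses $|\pnorm{f-g}^2-\pnorm{f_k-g}^2|\leq4F\varepsilon$. One point of wording: say explicitly that this pointwise majorant bounds $\E\sup_f|Z_q(f,f_k)|$. Your displayed inequality "$\E|Z_q(f,f_k)|\leq C\varepsilon$ uniformly in $f$" by itself only bounds $\sup_f\E|Z_q(f,f_k)|$, which is not enough.
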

\begin{proof}
Write $w_{ij}=q_i/m_i$. Then $\sum_{i,j}w_{ij}=1$,
$\sum_{i,j}w_{ij}^2=V_q$, and $\max_{i,j}w_{ij}=w_q$.
For fixed $f,g$, the centered summands in \eqref{eq:Z} are conditionally independent given $\mathcal M$.
Expanding their exponential moments and using Lemma~\ref{lem:moments} gives, for
$|\lambda|<(c_0w_q)^{-1}$,
\begin{equation}\label{eq:mgf}
 \log\E_{\mathcal M}\exp\{\lambda Z_q(f,g)\}
 \leq \frac{\lambda^2v_0V_q\pnorm{f-g}^2}
                 {2(1-c_0w_q|\lambda|)} .
\end{equation}
Indeed, for a centered summand $\xi$ with weight $w$, its exponential series is at most
$1+\lambda^2w^2v_0\pnorm{f-g}^2/
\{2(1-c_0w|\lambda|)\}$. The inequality $\log(1+x)\leq x$, conditional independence, and $w\leq w_q$ give \eqref{eq:mgf}.

The exponential Markov inequality, optimized in $\lambda$ on each side, now gives
\begin{equation}\label{eq:tail}
 \PP_{\mathcal M}\left\{
 |Z_q(f,g)|>
 \sqrt{2v_0V_q\pnorm{f-g}^2t}+c_0w_qt
 \right\}\leq2e^{-t},\qquad t>0 .
\end{equation}
For example, with $v=v_0V_q\pnorm{f-g}^2$ and $c=c_0w_q$, substitute
$\lambda=\sqrt{2t/v}/(1+c\sqrt{2t/v})$ in \eqref{eq:mgf};
the case $v=0$ follows by continuity. Young's inequality bounds the threshold by
\[
 \eta\pnorm{f-g}^2+
 \left(\frac{v_0V_q}{2\eta}+c_0w_q\right)t
 \leq\eta\pnorm{f-g}^2+C_\eta(V_q+w_q)t.
\]

Take an $\varepsilon$-net $f_1,\ldots,f_K$ in $\cF$, with $\log K=H(\varepsilon)$.
The union bound in \eqref{eq:tail}, with $t=\log(2K)+u$, implies
\begin{align*}
 \PP_{\mathcal M}\bigg\{
 \max_{k\leq K}
 \bigl[|Z_q(f_k,g)|-\eta\pnorm{f_k-g}^2\bigr]_+
 >C_\eta(V_q+w_q)\{\log(2K)+u\}
 \bigg\}\leq e^{-u}.
\end{align*}
Integration in $u\geq0$ bounds the conditional expected maximum by
$C_\eta(V_q+w_q)\{\log(2K)+1\}$.

For an arbitrary $f$, choose $f_k$ with $\norm{f-f_k}_\infty\leq\varepsilon$.
The loss difference is Lipschitz in its fitted argument with random constant
$L_b+|Y_{ij}|$, so
\[
 |Z_q(f,g)-Z_q(f_k,g)|\leq\varepsilon\mathcal L,\qquad
 \mathcal L=\sum_{i,j}w_{ij}\{2L_b+|Y_{ij}|+\E_{\mathcal M}|Y_{ij}|\}.
\]
Since $|M_i(X_{ij})|\leq M_{\max}$ and
$\E(|\epsilon_{ij}|\mid\mathcal M)\leq\sigma$,
\[
 \E\mathcal L\leq2L_b+2(M_{\max}+\sigma).
\]
Also,
\[
 \left|\pnorm{f-g}^2-\pnorm{f_k-g}^2\right|
 \leq4F\varepsilon.
\]
It follows that the supremum over $\cF$ is bounded by the finite maximum plus
$\varepsilon(\mathcal L+4\eta F)$. Taking expectation proves
\eqref{eq:offset}. If a smallest net is not attained, a net with cardinality
arbitrarily close to the covering number gives the same bound after increasing the constant.
\end{proof}

\subsection{Proof of Theorem~\ref{thm:oracle}}
Fix deterministic $g\in\cF$ and let $\overline U_q=\sum_iq_iU_i$.
The conditional mean of the loss difference is
\[
 \E_{\mathcal M}G_{ij}(f,g)
 =\mathcal R(f)-\mathcal R(g)-\ip{U_i}{f-g}.
\]
Consequently the following identity holds for all $f\in\cF$ on the same sample:
\begin{equation}\label{eq:lossidentity}
 \widehat L_q(f)-\widehat L_q(g)
 =\mathcal R(f)-\mathcal R(g)-\ip{\overline U_q}{f-g}+Z_q(f,g).
\end{equation}
The uniform bound from Lemma~\ref{lem:net} therefore applies when $f=\widehat f$, although $\widehat f$ depends on the observations.

Let
\[
 T_\eta(g)=\sup_{f\in\cF}
   [|Z_q(f,g)|-\eta\pnorm{f-g}^2]_+.
\]
Combining \eqref{eq:erm} and \eqref{eq:lossidentity}, then applying Cauchy--Schwarz and Young's inequality, gives
\begin{align*}
 \mathcal R(\widehat f)
 &\leq\mathcal R(g)+|\ip{\overline U_q}{\widehat f-g}|
          +|Z_q(\widehat f,g)|+\Delta\\
 &\leq\mathcal R(g)+2\eta\pnorm{\widehat f-g}^2
       +(4\eta)^{-1}\pnorm{\overline U_q}^2+T_\eta(g)+\Delta .
\end{align*}
Set $u=\pnorm{\widehat f-f_0}^2$ and $a=\pnorm{g-f_0}^2$. Since
$\pnorm{\widehat f-g}^2\leq2u+2a$, Proposition~\ref{prop:curvature} yields
\begin{equation}\label{eq:absorb}
 (\kappa_-/2-4\eta)u
 \leq(\kappa_+/2+4\eta)a
       +(4\eta)^{-1}\pnorm{\overline U_q}^2+T_\eta(g)+\Delta .
\end{equation}
Choose $\eta=\kappa_-/16$; the coefficient of $u$ is $\kappa_-/4$.

The random functions are centered and independent as $L_2(P_X)$ elements.
Fubini's theorem is applicable because they are bounded. Thus
\begin{align}
 \E\pnorm{\overline U_q}^2
 &=\sum_iq_i^2\E\pnorm{U_i}^2+
       \sum_{i\ne k}q_iq_k\int\E\{U_i(x)U_k(x)\}\,dP_X(x)\notag\\
 &=\tau^2 A_q.\label{eq:hilbert}
\end{align}
Take expectations in \eqref{eq:absorb}, apply Lemma~\ref{lem:net}, and divide by $\kappa_-/4$.
This gives \eqref{eq:oracle} with the comparator $g$. Taking the infimum over deterministic comparators proves the stated inequality, by using a sequence approaching the infimum if necessary. The mean-scale bound follows from the second inequality in \eqref{eq:curvature}. \qed

\subsection{Proofs of Corollaries~\ref{cor:balanced} and \ref{cor:weights}}
For balanced data, $A_q=1/n$ and $V_q=w_q=1/(nm)$. Substitution into \eqref{eq:oracle} proves Corollary~\ref{cor:balanced}.
For $q_i=1/n$,
\[
 A_q=\frac1n,\qquad V_q=\frac1{n^2}\sum_i\frac1{m_i}=\frac1{nm_H},
 \qquad w_q=\frac1{nm_{\min}}.
\]
For $q_i=m_i/N$,
\[
 A_q=\frac{\sum_i m_i^2}{N^2},\qquad
 V_q=\sum_i\frac{m_i}{N^2}=\frac1N,\qquad w_q=\frac1N.
\]
These identities prove Corollary~\ref{cor:weights}. Finally,
$\E\widehat L_q(f)=\E\ell(Y,f(X))$ for every choice of these deterministic weights, since the weights sum to one and the marginal law is common. This verifies the population-target statement. \qed

\section{Network entropy and approximation}\label{app:rates}
\subsection{An entropy calculation}
\begin{lemma}\label{lem:entropy}
Let $W=\max_{0\leq u\leq L+1}p_u$, $B\geq1$, and $1\leq s\leq P$, where
$P=\sum_{u=1}^{L+1}p_u(p_{u-1}+1)$ is the number of available parameters.
For $\cF=\cF(L,\mathbf p,s,B,F)$ and $0<\varepsilon\leq1$,
\begin{equation}\label{eq:entropy}
 H(\varepsilon)\leq
 C(s+1)(L+1)\log\left\{\frac{C(B+1)(L+1)(W+1)}{\varepsilon}\right\}.
\end{equation}
\end{lemma}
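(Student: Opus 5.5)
The bound follows from the standard parameter-perturbation argument of \citet{schmidthieber2020}: count the sparsity patterns, discretize the surviving parameters on a grid, and control how a small parameter change moves the network output in sup norm.

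\emph{Step 1: Lipschitz bound in the parameters.} Take two parameter vectors $\theta,\theta'$ with the same sparsity pattern, both with entries bounded by $B$, and with $\|\theta-\theta'\|_\infty\leq\delta$. I will prove by induction over layers that
\[
 \sup_{x\in[0,1]^d}|f^{\rm raw}_\theta(x)-f^{\rm raw}_{\theta'}(x)|\leq\delta\,(L+1)\prod_{\ell=1}^{L+1}(p_{\ell-1}+1)(B\vee1)
 \leq\delta(L+1)\{(W+1)(B+1)\}^{L+1}.
\]
The induction uses two facts.
\begin{itemize}
\item Each hidden vector satisfies $\|z_\ell\|_\infty\leq\prod_{k\leq\ell}(p_{k-1}+1)(B\vee 1)$, since ReLU is $1$-Lipschitz and each affine map multiplies the sup norm by at most $(p_{\ell-1}+1)B$.
\item Changing a single layer by $\delta$ produces an error that is propagated through the remaining layers with the same multiplicative constants.
\end{itemize}
Telescoping over the $L+1$ layers then gives the factor $L+1$. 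Clipping is $1$-Lipschitz, so the same bound holds for $f_\theta$.

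\emph{Step 2: discretize.} Choose $\delta=\varepsilon/\{(L+1)((W+1)(B+1))^{L+1}\}$ and round each nonzero parameter to a grid of spacing $\delta$ in $[-B,B]$. For a fixed pattern the grid has at most $(2B/\delta+1)^s$ points. By Step 1 these grid networks form an $\varepsilon$-net in $\|\cdot\|_\infty$. The net elements lie in $\cF$, since rounding toward zero keeps every entry in $[-B,B]$ and preserves the pattern. If a net inside $\cF$ is required, I will use the standard factor-of-two adjustment to move from an external net to an internal one, which changes only the constant $C$.

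\emph{Step 3: count the patterns.} Sum over sparsity levels $s'\leq s$. The number of patterns is at most $\sum_{s'\leq s}\binom{P}{s'}\leq (P+1)^s$. Combining with Step 2,
\[
 H(\varepsilon)\leq s\log(P+1)+s\log\!\left(\frac{2B}{\delta}+1\right).
\]

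\emph{Step 4: simplify.} Note that $P\leq (L+1)(W+1)^2$, so $\log(P+1)\lesssim\log\{(L+1)(W+1)\}$. Also
\[
 \log(1/\delta)=\log(1/\varepsilon)+\log(L+1)+(L+1)\log\{(W+1)(B+1)\}.
\]
Collecting terms, each is at most $(L+1)\log\{C(B+1)(L+1)(W+1)/\varepsilon\}$ when $\varepsilon\leq1$, which yields \eqref{eq:entropy}. The $(s+1)$ factor absorbs the case where every parameter is zero and the additive constants.

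The main care point is Step 1. The output layer is affine rather than ReLU, and bias entries contribute the "$+1$" in the width factors. I will handle both explicitly by appending a constant coordinate $1$ to each $z_{\ell-1}$, which turns every layer into a linear map with at most $p_{\ell-1}+1$ entries per row.
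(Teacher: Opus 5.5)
Your proposal is correct and follows essentially the same route as the paper: a layerwise parameter-perturbation bound with Lipschitz constant of order $(L+1)\{(B+1)(W+1)\}^{L+1}$, a grid on the active parameters with mesh equal to $\varepsilon$ divided by that constant, and a count of sparsity patterns. Your pattern bound $(P+1)^s$ is even slightly sharper than the paper's $(s+1)(P+1)^s$. One harmless imprecision: rounding toward zero can shrink the support rather than preserve it, but the perturbation bound in Step~1 never uses equality of patterns, and the rounded network still lies in $\mathcal F$.
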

\begin{proof}
First fix a support of at most $s$ nonzero parameters. Compare two parameter vectors with coordinatewise difference at most $\delta$ and magnitudes at most $B$.
Put $R=(B+1)(W+1)$. For input coordinates bounded by one, the maximum absolute activation after layer $u$ is at most $R^u$:
the recursion is $a_u\leq BW a_{u-1}+B$, with $a_0\leq1$, and $R$ bounds the right side relative to $R^{u-1}$.

Let $e_u$ be the maximum coordinate difference at layer $u$.
ReLU is 1-Lipschitz, so the same estimate, also valid for the affine output layer, gives
\[
 e_u\leq BW e_{u-1}+(W+1)R^{u-1}\delta,\qquad e_0=0.
\]
Induction yields
$e_u\leq u(W+1)R^{u-1}\delta$. For $u=L+1$ the output difference is at most
\[
 D_L\delta,\qquad D_L=(L+1)(W+1)R^L.
\]
Clipping cannot increase this difference. A grid of mesh at most
$\varepsilon/D_L$ for each active parameter therefore gives a uniform
$\varepsilon$-net of size at most
$(1+2BD_L/\varepsilon)^s$, after an immaterial fixed enlargement of the grid.

There are at most $\sum_{k=0}^s\binom{P}{k}\leq(s+1)(P+1)^s$ possible supports.
Taking the union of the grids over these supports gives
\[
 H(\varepsilon)\leq\log(s+1)+s\log(P+1)
          +s\log(1+2BD_L/\varepsilon).
\]
Since $P\leq(L+1)W(W+1)$, substitution of $D_L$ and $R$ proves
\eqref{eq:entropy}. The centers are themselves networks in the same bounded parameter class.
\end{proof}

\subsection{Approximation of the compositional class}
We state the approximation input in the form used here.
For a scalar $t$-variate function in a fixed H\"older ball of smoothness $\beta$,
\citet[Theorem 5]{schmidthieber2020} constructs, after a fixed affine change of domain if needed, a ReLU network with
\[
 \text{depth }O(k),\qquad
 \text{width }O(J),\qquad
 \text{sparsity }O(Jk),\qquad
 \norm{g-\widetilde g}_\infty
       \leq C\{J2^{-k}+J^{-\beta/t}\}.
\]
The parameter bound is fixed on fixed domains. This scalar approximation theorem is the external input; the following choices and composition argument give the form required for Corollary~\ref{cor:rates}.

Set $J_N=\lceil N\phi_N\rceil$ and $k_N=\lceil c_1\log_2 N\rceil$, with $c_1$ a sufficiently large fixed constant.
Each coordinate of $g_u$ depends on at most $t_u$ arguments, so it can be approximated by a scalar network of the displayed form.
The approximation may be clipped to its component output interval: the true output is in that interval, so clipping decreases error.
It can be realized by a fixed number of extra ReLU operations.
Parallelizing the finitely many component coordinates and concatenating the component maps gives depth $O(\log N)$, width $O(J_N)$, and sparsity $O(J_N\log N)$.
Affine changes between the fixed component rectangles require only fixed parameter bounds.

Write $\alpha_u=\beta_u\wedge1$ and
$\delta_u=\norm{\widetilde g_u-g_u}_\infty$, using the coordinatewise maximum norm.
A H\"older-smooth component is $\alpha_u$-H\"older as a map on its rectangle.
If $e_u$ is the error after composing through component $u$, then
\[
 e_u\leq C e_{u-1}^{\alpha_u}+\delta_u,\qquad e_{-1}=0.
\]
Repeated use of $(a+b)^\alpha\leq a^\alpha+b^\alpha$ for $0<\alpha\leq1$ gives
\begin{equation}\label{eq:propagate}
 \norm{\widetilde f-f_0}_\infty
 \leq C\sum_{u=0}^q\delta_u^{\,\prod_{v=u+1}^q\alpha_v}.
\end{equation}
Intermediate clipping ensures that all evaluations remain in the rectangles on which the regularity bounds hold.

For each $u$,
\[
 J_N\geq N^{\,t_u/(2\beta_u^*+t_u)}.
\]
Consequently,
\[
 J_N^{-\beta_u^*/t_u}
 \leq N^{-\beta_u^*/(2\beta_u^*+t_u)}
 \leq\phi_N^{1/2}.
\]
Since all smoothness indices are fixed and positive, $c_1$ can be chosen so that the terms from $J_N2^{-k_N}$ in \eqref{eq:propagate} are also bounded by $C\phi_N^{1/2}$. It follows that
\begin{equation}\label{eq:approx}
 \norm{\widetilde f-f_0}_\infty^2\leq C\phi_N.
\end{equation}
Clipping the final output to $[-F,F]$ preserves this bound.
The architecture constants in Corollary~\ref{cor:rates} are chosen to contain this construction; unused nodes and connections have zero weights.
If depth needs padding, the finitely many active component outputs can be passed through identity channels.
This only changes the sparsity by $O(\log N)$.

\subsection{Proof of Corollary~\ref{cor:rates}}
Choose $\varepsilon=N^{-2}$, for $N\geq2$. Under the architecture restrictions, the logarithm in \eqref{eq:entropy} is $O(\log N)$, so
\[
 H(N^{-2})\lesssim sL\log N
          \lesssim N\phi_N(\log N)^3.
\]
Equation~\eqref{eq:approx} bounds the squared $L_2(P_X)$ approximation error by $C\phi_N$, for any $P_X$. Substitution into \eqref{eq:balanced} gives
\[
 \E\pnorm{\widehat\mu-\mu_0}^2
 \lesssim \tau^2/n+\phi_N+\phi_N(\log N)^3+N^{-2}+\E\Delta.
\]
Since $\phi_N\geq N^{-1}$ and $\E\Delta\lesssim\phi_N$, this proves
\eqref{eq:rate}. Taking $q=0$, $t_0=d$, and $\beta_0=\beta$ proves
\eqref{eq:holderrate}. \qed

\section{Minimax lower bound}\label{app:lower}
We give both submodels explicitly. The independent-measurement argument is an Assouad construction, and the unit term follows from two-point testing \citep{tsybakov2009}.

\subsection{Independent-measurement submodel}
Write $N=nm$. Set $U_i=0$ and use conditional Bernoulli, Poisson, or Gaussian laws with canonical parameter $f(x)$. Then all $N$ measurements are independent, regardless of their recorded unit labels.

Choose a nonzero nonnegative function $\psi\in C_c^\infty((1/4,3/4)^d)$.
Let $K=\lceil N^{1/(2\beta+d)}\rceil$, $h=1/K$, and partition $[0,1]^d$ into $M=K^d$ cubes $Q_1,\ldots,Q_M$ of side $h$. If $a_k$ is the lower corner of $Q_k$, define
\[
 \psi_k(x)=h^\beta\psi\{(x-a_k)/h\}.
\]
The functions are extended by zero off their cubes. For $\omega\in\{0,1\}^M$ and a fixed small $\gamma>0$, put
\begin{equation}\label{eq:bumps}
 f_\omega(x)=\gamma\sum_{k=1}^M\omega_k\psi_k(x),
 \qquad \mu_\omega(x)=b'\{f_\omega(x)\}.
\end{equation}
The supports are disjoint. A derivative of order $j\leq k_\beta$, where
$\beta=k_\beta+\alpha$, is bounded by $C\gamma h^{\beta-j}$.
The $\alpha$-H\"older seminorm of an order-$k_\beta$ derivative on a single support is at most $C\gamma$ by scaling.
For different supports, their separation is at least $h/2$, and their derivative magnitudes are at most $C\gamma h^\alpha$, giving the same seminorm bound.
The smooth zero extension handles a point outside a support. Thus the H\"older norm of $f_\omega$ is bounded by $C\gamma$, uniformly in $K$ and $\omega$.

Choose $\gamma$ small enough that this norm is at most $R$ and
$\norm{f_\omega}_\infty\leq F$. The conditional means then remain in the prescribed interior intervals: $b'(0)=1/2$, $1$, or $0$ in the Bernoulli, Poisson, and Gaussian cases, respectively. All these alternatives lie in $\mathfrak P_{\beta,d}$.

Let $\omega^{(k)}$ be $\omega$ with bit $k$ flipped, and let $P_\omega$ denote the full data law. For a canonical exponential family with cumulant $b$, the conditional KL divergence from parameter $t$ to parameter $s$ is
\[
 b(s)-b(t)-b'(t)(s-t)\leq\frac{\kappa_+}{2}(s-t)^2.
\]
The design law is identical under all alternatives. Additivity of KL for independent measurements gives
\begin{align}
 \KL(P_\omega,P_{\omega^{(k)}})
 &\leq \frac{\kappa_+}{2}N\int(f_\omega-f_{\omega^{(k)}})^2\,dx\notag\\
 &=\frac{\kappa_+}{2}N\gamma^2h^{2\beta+d}\int\psi^2\,dx
 \leq C\gamma^2,\label{eq:bumpkl}
\end{align}
because $Nh^{2\beta+d}\leq1$. Reducing $\gamma$ once more ensures that
\eqref{eq:bumpkl} is at most $1/8$.
Pinsker's inequality implies
\begin{equation}\label{eq:tvbump}
 \TV(P_\omega,P_{\omega^{(k)}})\leq1/4.
\end{equation}

On cube $Q_k$, the mean function takes one of two forms:
$\mu_{k,0}(x)=b'(0)$ and
$\mu_{k,1}(x)=b'\{\gamma\psi_k(x)\}$. Their squared separation is at least
\begin{equation}\label{eq:separation}
 d_k^2=\int_{Q_k}(\mu_{k,1}-\mu_{k,0})^2\,dx
 \geq\kappa_-^2\gamma^2h^{2\beta+d}\int\psi^2\,dx.
\end{equation}
For any candidate estimator $\widetilde\mu$, define $\widehat\omega_k$ to be the bit whose mean function is closer to $\widetilde\mu$ in $L_2(Q_k)$, resolving ties deterministically.
If $\widehat\omega_k\ne\omega_k$, the triangle inequality and the definition of the closer mean give
\[
 \int_{Q_k}(\widetilde\mu-\mu_\omega)^2\,dx\geq d_k^2/4.
\]
Thus
\begin{equation}\label{eq:decode}
 2^{-M}\sum_\omega \E_\omega\norm{\widetilde\mu-\mu_\omega}_{L_2(dx)}^2
 \geq\frac14\sum_{k=1}^M d_k^2\,
       2^{-M}\sum_\omega\PP_\omega(\widehat\omega_k\ne\omega_k).
\end{equation}
Pairing each $\omega$ with $\omega^{(k)}$, the sum of the two testing error probabilities is at least $1-\TV(P_\omega,P_{\omega^{(k)}})$.
The average error in \eqref{eq:decode} is therefore at least $3/8$.
Using \eqref{eq:separation} and $Mh^d=1$ yields
\[
 \sup_\omega\E_\omega\norm{\widetilde\mu-\mu_\omega}_{L_2(dx)}^2
 \geq c h^{2\beta}.
\]
Since $K\leq2N^{1/(2\beta+d)}$ for $N\geq1$, this is at least
$c'N^{-2\beta/(2\beta+d)}$. The bound holds for every estimator.

\subsection{Random-intercept submodel}
Let $\mu_c=b'(0)$. Choose a fixed $a>0$ so small that $\mu_c-a$ and $\mu_c+a$ are allowed conditional means, $2a$ is below the uniform random-effect bound, and $a^2$ is below its integrated variance bound.
For $|\eta|\leq1/2$, let
\[
 \PP_\eta(V_i=1)=\frac{1+\eta}{2},\qquad
 \PP_\eta(V_i=-1)=\frac{1-\eta}{2},\qquad
 M_i(x)=\mu_c+aV_i.
\]
Then
\[
 \mu_\eta(x)=\mu_c+a\eta,\qquad
 U_i(x)=a(V_i-\eta),\qquad
 \E_\eta\pnorm{U_i}^2=a^2(1-\eta^2).
\]
The marginal index $f_\eta=(b')^{-1}(\mu_c+a\eta)$ is constant.
For sufficiently small fixed $a$ it is in the specified H\"older ball and in $[-F,F]$.
All constants are uniform in $\eta$ and $m$.

Take $\eta_\pm=\pm d_0/\sqrt n$, with a fixed $0<d_0\leq1/8$.
The supports of $M_i$ are the same at the two alternatives.
Given $M_1,\ldots,M_n$, the kernel generating the design variables and the responses in \eqref{eq:model} does not depend on $\eta$.
The data-processing inequality and independence of the latent variables imply
\begin{align}
 \KL(P_{\eta_+},P_{\eta_-})
 &\leq n\KL\left({\rm Bern}\!\left(\frac{1+\eta_+}{2}\right),
                 {\rm Bern}\!\left(\frac{1-\eta_+}{2}\right)\right)\notag\\
 &=n\eta_+\log\frac{1+\eta_+}{1-\eta_+}
 \leq4n\eta_+^2=4d_0^2\leq1/16.\label{eq:latentkl}
\end{align}
For $0\leq t\leq1/2$, the logarithm bound in the last line follows by integrating
$2/(1-t^2)\leq 8/3<4$. In particular, the total variation distance between the two data laws is bounded above by a fixed constant smaller than $1/2$, uniformly in $m$.

For an arbitrary function estimator with finite integrated risk, let
$\widehat t=\int\widetilde\mu(x)\,dx$.
Jensen's inequality gives
\[
 \int\{\widetilde\mu(x)-\mu_\eta\}^2\,dx\geq(\widehat t-\mu_\eta)^2.
\]
The two scalar means differ by
$d_\mu=2ad_0/\sqrt n$.
Classify $\widehat t$ by the nearer mean. Under a misclassification,
$(\widehat t-\mu_\eta)^2\geq d_\mu^2/4$.
The sum of testing errors is at least $1-\TV(P_{\eta_+},P_{\eta_-})$, so
\begin{align*}
 \max_{\eta\in\{\eta_+,\eta_-\}}
 \E_\eta\pnorm{\widetilde\mu-\mu_\eta}^2
 &\geq\frac{d_\mu^2}{8}
        \{1-\TV(P_{\eta_+},P_{\eta_-})\}\\
 &\geq c/n .
\end{align*}
Estimators with infinite risk already satisfy the lower bound.

\subsection{Completion of Theorem~\ref{thm:lower}}
Both submodels lie in the same fixed class $\mathfrak P_{\beta,d}$. The minimax risk over that class is at least the larger of their lower bounds. For nonnegative $a,b$,
$\max(a,b)\geq(a+b)/2$. This proves \eqref{eq:lower} after adjusting the constant.
The positive allowance for random-intercept variation is needed for the $n^{-1}$ term; the subclass with $\tau=0$ uses only the independent-measurement bound. \qed

\section{Proofs for subsampling inference}\label{app:inference}
All expectations in this appendix include training randomness when it is present. Conditional expectations denoted by $\E_*$ hold the observed unit records fixed and average over a new uniform subset and a new training seed. Vector norms are Euclidean; $\norm{\cdot}_F$ denotes the Frobenius matrix norm. We suppress the dependence of kernels and distributions on $n$.

\subsection{Orthogonal decomposition}
We first record the variance identities used in both inference theorems. Let $g(\cO_1,\ldots,\cO_r)$ be a symmetric, square-integrable vector kernel with mean $\vartheta$. For $0\leq k\leq r$, define
\[
 g^{[k]}(o_1,\ldots,o_k)=
 \E g(o_1,\ldots,o_k,\cO_{k+1},\ldots,\cO_r),
 \qquad g^{[0]}=\vartheta.
\]
Its order-$k$ canonical kernel, for $k\geq1$, is
\[
 g_k(o_1,\ldots,o_k)=
 \sum_{A\subseteq\{1,\ldots,k\}}(-1)^{k-|A|}g^{[|A|]}(o_A).
\]
Inclusion--exclusion gives
\begin{equation}\label{eq:kerneldecomp}
 g(\cO_1,\ldots,\cO_r)-\vartheta
 =\sum_{k=1}^r\ \sum_{\substack{A\subseteq\{1,\ldots,r\}\\ |A|=k}}g_k(\cO_A).
\end{equation}
Integrating any one argument of $g_k$ pairs terms with opposite signs and gives zero. Consequently, canonical terms on different index sets are orthogonal: select an index belonging to one set but not the other and condition on all remaining records. This proves orthogonality coordinatewise and after any deterministic linear transformation.

For the complete U-statistic $U_g=\binom{n}{r}^{-1}\sum_{|S|=r}g(\cO_S)$, a fixed set of $k$ records appears in $\binom{n-k}{r-k}$ summands. Therefore
\begin{equation}\label{eq:udecomp}
 U_g-\vartheta
 =\sum_{k=1}^r\frac{\binom{r}{k}}{\binom{n}{k}}
       \sum_{|A|=k}g_k(\cO_A).
\end{equation}
For every fixed matrix $D$ of compatible dimension, orthogonality yields
\begin{align}
 \E\norm{D(g-\vartheta)}^2
 &=\sum_{k=1}^r\binom{r}{k}\E\norm{Dg_k}^2,\label{eq:kernelvariance}\\
 \E\norm{D(U_g-\vartheta)}^2
 &=\sum_{k=1}^r\frac{\binom{r}{k}^2}{\binom{n}{k}}
                       \E\norm{Dg_k}^2.\label{eq:uvariance}
\end{align}
These identities also hold for matrices $D=D_n$ that vary deterministically with $n$. Since
\[
 \frac{\binom{r}{k}}{\binom{n}{k}}
 =\prod_{j=0}^{k-1}\frac{r-j}{n-j}\leq\frac rn,
 \qquad 1\leq k\leq r\leq n,
\]
we obtain the contraction bound
\begin{equation}\label{eq:ucontraction}
 \E\norm{D(U_g-\vartheta)}^2
 \leq\frac rn\E\norm{D(g-\vartheta)}^2.
\end{equation}
For $k\geq2$, the ratio of binomial coefficients is at most $r(r-1)/\{n(n-1)\}$. Subtracting the first projection from \eqref{eq:udecomp} thus gives
\begin{equation}\label{eq:secondcontraction}
 \E\left\|D\left(U_g-\vartheta-\frac rn\sum_i g_1(\cO_i)\right)\right\|^2
 \leq\frac{r(r-1)}{n(n-1)}\E\norm{D(g-\vartheta)}^2.
\end{equation}

\subsection{Proof of Theorem~\ref{thm:projection}}
Let $W_{ni}=\Gamma_r^{-1/2}h_{1,r}(\cO_i)$. Within each row these vectors are independent, have mean zero, and have covariance $I_J$. For a fixed vector $v\in\RR^J$,
\[
 \frac{\sum_{i=1}^n\E|v^{\mathsf T}W_{ni}|^{2+\delta}}
      {n^{1+\delta/2}}
 \leq\norm{v}^{2+\delta}n^{-\delta/2}\E\norm{W_{n1}}^{2+\delta}
 \longrightarrow0.
\]
The scalar Lyapunov theorem, followed by the Cram\'er--Wold argument, gives
\[
 V_n^{-1/2}L_{n,r}=n^{-1/2}\sum_iW_{ni}
 \Longrightarrow N_J(0,I_J).
\]
The two terms in \eqref{eq:projconditions} make the normalized higher-order remainder and bias negligible, the former by Markov's inequality. Slutsky's theorem proves the assertion.

To verify the stated sufficient condition, use \eqref{eq:secondcontraction} with $g=h_r$ and $D=\Gamma_r^{-1/2}$. Since $V_n^{-1/2}=(\sqrt n/r)\Gamma_r^{-1/2}$,
\[
 \E\norm{V_n^{-1/2}R^{(2)}_{n,r}}^2
 \leq\frac{r-1}{r(n-1)}
       \E\norm{\Gamma_r^{-1/2}(h_r-\theta_r)}^2.
\]
This tends to zero under \eqref{eq:projection-sufficient}. When $r=1$, there is no higher-order remainder. \qed

\subsection{Asymptotic distribution under Assumption~\ref{ass:al}}
For this subsection and the next two, apply the deterministic transformation $\Sigma_n^{-1/2}$ to the prediction vectors, their expectations, the influence functions, and the remainders, and suppress tildes. Thus the normalized influence covariance is $I_J$, its fourth moment is $o(n)$, and $r\E\norm{R_r}^2\to0$. The raw IJ matrix, the subtraction $D_B$, and the empirical covariance term transform by congruence. The positive-part operation need not do so; we handle it separately by showing that the matrix to which it is applied is already positive definite with probability tending to one.

Write $\psi_i=\psi_n(\cO_i)$ and $\overline\psi=n^{-1}\sum_i\psi_i$. Taking the seed expectation in \eqref{eq:al} gives
\[
 h_r-\theta_r=\frac1r\sum_{i=1}^r\psi_i+R_r^h,
 \qquad R_r^h=\E_\omega R_r,
 \qquad \E\norm{R_r^h}^2\leq\E\norm{R_r}^2=o(r^{-1}).
\]
The kernel $R_r^h$ is symmetric and centered. Averaging over subsets and applying \eqref{eq:ucontraction} shows that
\begin{equation}\label{eq:ual}
 U_{n,r}-\theta_r=\overline\psi+U_R,
 \qquad \E\norm{U_R}^2\leq\frac rn\E\norm{R_r^h}^2=o(n^{-1}).
\end{equation}
The fourth-moment bound implies the Lyapunov condition for the normalized $\psi_i$. Thus
\begin{equation}\label{eq:ualclt}
 \sqrt n\,(U_{n,r}-\theta_r)
 \Longrightarrow N_J(0,I_J).
\end{equation}
In particular, \eqref{eq:ual} and Cauchy--Schwarz imply
$\norm{n\Cov(U_{n,r})-I_J}_{\rm op}\to0$ in these coordinates.

Let $v_* =\E_*\norm{T_b-U_{n,r}}^2$. From \eqref{eq:al},
\begin{equation}\label{eq:basevariance}
 \E\norm{T_r-\theta_r}^2
 \leq\frac{2J}{r}+2\E\norm{R_r}^2=O(r^{-1}),
 \qquad \E v_*\leq\E\norm{T_r-\theta_r}^2.
\end{equation}
Conditional independence of the fits gives
\begin{equation}\label{eq:mcvariance}
 \E\norm{\overline T-U_{n,r}}^2=\frac{\E v_*}{B}=O((rB)^{-1}).
\end{equation}
Under $B/n\to\infty$ this is $o(n^{-1})$. Combining \eqref{eq:ualclt}, \eqref{eq:mcvariance}, and \eqref{eq:biasal} proves
\begin{equation}\label{eq:ensembleclt}
 \sqrt n\,(\overline T-\mu_{\mathbf x})
 \Longrightarrow N_J(0,I_J).
\end{equation}
The same conditioning argument, in matrix form, proves the exact identity \eqref{eq:totalvariance} for every finite $B$.

\subsection{Ideal infinitesimal jackknife}
Let $Z=(Z_1,\ldots,Z_n)^{\mathsf T}$ denote membership in a new uniform subset of size $r$, put $A=Z-p\mathbf1$, and set
\[
 C_i=\E_*\{(Z_i-p)(T_r-U_{n,r})\},\qquad
 V_{\rm IJ}^{\infty}=a_n^2\sum_i C_iC_i^{\mathsf T}.
\]
The following elementary identities keep track of sampling without replacement:
\begin{equation}\label{eq:membership}
 \Cov_*(Z_i,Z_j)=
 \begin{cases}p(1-p),&i=j,\\-p(1-p)/(n-1),&i\ne j,\end{cases}
 \quad
 \E_*AA^{\mathsf T}=\lambda_n\left(I_n-\frac{\mathbf1\mathbf1^{\mathsf T}}n\right),
 \quad \lambda_n=\frac{r(n-r)}{n(n-1)}.
\end{equation}
They follow from $\E_*Z_i=r/n$ and $\E_*Z_iZ_j=r(r-1)/\{n(n-1)\}$ for $i\ne j$. Also $\norm{A}^2=r(1-p)$ for every subset.

Apply \eqref{eq:al} to the selected records. Decompose $C_i=C_i^{\rm lin}+C_i^R$. The linear term is exactly
\begin{align}
 C_i^{\rm lin}
 &=\frac1r\sum_j\Cov_*(Z_i,Z_j)\psi_j
 =\frac{n-r}{n(n-1)}(\psi_i-\overline\psi),\notag\\
 a_n C_i^{\rm lin}&=\frac{\psi_i-\overline\psi}{n}.\label{eq:ijlinear}
\end{align}
Consequently,
\begin{equation}\label{eq:linearijlimit}
 n a_n^2\sum_i C_i^{\rm lin}(C_i^{\rm lin})^{\mathsf T}
 =\frac1n\sum_i(\psi_i-\overline\psi)(\psi_i-\overline\psi)^{\mathsf T}
 =I_J+o_p(1)
\end{equation}
in operator norm. To justify the last equality for the triangular array, the variance of each entry of $n^{-1}\sum_i\psi_i\psi_i^{\mathsf T}$ is at most $n^{-1}\E\norm{\psi_i}^4=o(1)$, and $\E\norm{\overline\psi}^2=J/n$. The dimension $J$ is fixed.

For the remainder, a useful consequence of \eqref{eq:membership} is
\begin{equation}\label{eq:bessel}
 \sum_i\norm{\E_*(A_i W)}^2
 \leq\lambda_n\E_*\norm{W-\E_*W}^2
\end{equation}
for any square-integrable random vector $W$ defined jointly with the subset and seed. For a scalar $W$, let $d=\E_*(A W)$. We have $\mathbf1^{\mathsf T}d=0$. If $d\ne0$, take $u=d/\norm{d}$ and use conditional Cauchy--Schwarz:
\[
 \norm{d}^2=\{\E_*[(u^{\mathsf T}A)(W-\E_*W)]\}^2
 \leq \E_*(u^{\mathsf T}A)^2\Var_*(W)
 =\lambda_n\Var_*(W).
\]
The case $d=0$ is immediate, and summing this inequality over the coordinates proves \eqref{eq:bessel}.

Use $W=R_r$ on the selected records. An unconditional random subset has the same joint law as $r$ independent records. Hence
\begin{equation}\label{eq:ijresidual}
 \E\sum_i\norm{C_i^R}^2
 \leq\lambda_n\E\norm{R_r}^2=o(n^{-1}).
\end{equation}
The factor $a_n$ is bounded because $r/n\leq\rho<1$. Regard the $C_i^{\rm lin}$ and $C_i^R$ as rows of $n\times J$ matrices. Equation~\eqref{eq:linearijlimit} gives a squared Frobenius norm $O_p(n^{-1})$ for the linear matrix, while \eqref{eq:ijresidual} gives $o_p(n^{-1})$ for the remainder matrix. The inequality
\begin{equation}\label{eq:gramperturb}
 \norm{(C+E)^{\mathsf T}(C+E)-C^{\mathsf T}C}_{\rm op}
 \leq2\norm{C}_F\norm{E}_F+\norm{E}_F^2
\end{equation}
therefore proves
\begin{equation}\label{eq:idealijlimit}
 \norm{nV_{\rm IJ}^{\infty}-I_J}_{\rm op}\to_p0.
\end{equation}

\subsection{Finite ensemble and covariance correction}
Form the $n\times J$ matrix $C$ with rows $C_i^{\mathsf T}$ and its empirical counterpart $\widehat C$. For each fit write $A_b=Z_b-p\mathbf1$, $W_b=T_b-U_{n,r}$, and $\overline A=B^{-1}\sum_b A_b$. Since $\E_*W_b=0$,
\begin{equation}\label{eq:chatdecomp}
 \widehat C-C
 =\left\{\frac1B\sum_b A_b W_b^{\mathsf T}-C\right\}
       -\overline A(\overline T-U_{n,r})^{\mathsf T}.
\end{equation}
The first term on the right is a conditionally centered average. Its conditional mean squared Frobenius norm is at most
\[
 \frac1B\E_*\norm{A_bW_b^{\mathsf T}}_F^2
 =\frac{r(1-p)}B v_*.
\]
For the second term, Jensen's inequality gives the deterministic bound $\norm{\overline A}^2\leq r(1-p)$. By \eqref{eq:mcvariance}, its conditional mean squared norm is at most $r(1-p)v_*/B$ as well. Taking expectations and using \eqref{eq:basevariance} yields
\begin{equation}\label{eq:chaterror}
 \E\norm{\widehat C-C}_F^2=O(B^{-1})=o(n^{-1}).
\end{equation}
From \eqref{eq:idealijlimit}, $\norm{C}_F=O_p(n^{-1/2})$. Applying \eqref{eq:gramperturb} to \eqref{eq:chaterror} proves
\[
 \norm{n\widehat V_{\rm IJ}-I_J}_{\rm op}\to_p0.
\]

It remains to control the two finite-$B$ adjustments. Sample covariance matrices are positive semidefinite. Since centering reduces a sum of squares,
\begin{align*}
 \sum_i\tr\{\widehat{\Cov}_b(Q_{bi})\}
 &\leq\frac1{B-1}\sum_b\sum_i\norm{Q_{bi}}^2\\
 &=\frac{r(1-p)}{B-1}\sum_b\norm{T_b-\overline T}^2.
\end{align*}
Conditionally on the data, the expectation of the last sum is $(B-1)v_*$. Therefore
\begin{equation}\label{eq:correctionbounds}
 \E\tr(D_B)\leq\frac{a_n^2r(1-p)}B\E v_*=O(B^{-1}),
 \qquad
 \E\tr\{B^{-1}\widehat{\Cov}_b(T_b)\}
 =\frac{\E v_*}B=O((rB)^{-1}).
\end{equation}
Their operator norms multiplied by $n$ tend to zero in probability. Hence the normalized matrix $n(\widehat V_{\rm IJ}-D_B)$ converges to $I_J$ and is positive definite with probability tending to one. Positive definiteness is preserved under invertible congruence, so the corresponding matrix in the original coordinates is positive definite on the same event. Its positive part is then the matrix itself. Returning by congruence to normalized coordinates and using \eqref{eq:correctionbounds} proves the second assertion in \eqref{eq:ijlimit}.

\subsection{Studentization and completion of Theorem~\ref{thm:ij}}
In this subsection all symbols again denote the original, untransformed quantities. For a fixed nonzero $v$, the scalar array $v^{\mathsf T}\psi_i/(v^{\mathsf T}\Sigma_n v)^{1/2}$ has mean zero, variance one, and fourth moment $o(n)$. Indeed, it is the projection of $\Sigma_n^{-1/2}\psi_i$ onto the deterministic unit vector $\Sigma_n^{1/2}v/(v^{\mathsf T}\Sigma_n v)^{1/2}$. The scalar Lyapunov theorem applies even when this direction varies with $n$. Equations~\eqref{eq:ual} and \eqref{eq:mcvariance} in normalized coordinates, together with the bias condition, imply the same limit for
\[
 \frac{\sqrt n\,v^{\mathsf T}(\overline T-\mu_{\mathbf x})}
      {(v^{\mathsf T}\Sigma_n v)^{1/2}}.
\]
Equation~\eqref{eq:ijlimit}, applied to the same unit vector, shows that
$n v^{\mathsf T}\widehat Vv/(v^{\mathsf T}\Sigma_n v)\to_p1$, proving the contrast assertion.

For the quadratic statistic put
\[
 Z_n=\Sigma_n^{-1/2}\sqrt n\,(\overline T-\mu_{\mathbf x}),\qquad
 A_n=\Sigma_n^{-1/2}(n\widehat V)\Sigma_n^{-1/2}.
\]
By \eqref{eq:ensembleclt}, $Z_n\Longrightarrow N_J(0,I_J)$; covariance consistency gives $\norm{A_n-I_J}_{\rm op}\to_p0$. With probability tending to one $A_n$ is invertible, and the quadratic statistic is $Z_n^{\mathsf T}A_n^{-1}Z_n$. It converges to $\chi_J^2$ by Slutsky's theorem and continuity. It may be assigned any value on the event of singular estimated covariance, whose probability tends to zero. This completes the proof. \qed

\end{document}